\newtheorem{proposition}{Proposition}
\newtheorem{corollary}{Corollary}
\newtheorem{lemma}{Lemma}
\DeclareMathOperator*{\E}{\mathbb{E}}
\DeclareMathOperator*{\argmax}{arg\,max}
\newcommand{\var}{\mathrm{Var}}
\icmltitlerunning{Off-Policy Actor-Critic for Maximum General Entropy and Effective Environment Exploration}
\begin{document}


\twocolumn[
\icmltitle{Off-Policy Actor-Critic in an Ensemble: Achieving Maximum General Entropy and Effective Environment Exploration in Deep Reinforcement Learning}

\icmlsetsymbol{equal}{*}

\begin{icmlauthorlist}
\icmlauthor{Gang Chen}{to}
\icmlauthor{Yiming Peng}{to}
\end{icmlauthorlist}

\icmlaffiliation{to}{School of Engineering and Computer Science, Victoria University of Wellington, New Zealand}

\icmlcorrespondingauthor{Gang Chen}{aaron.chen@ecs.vuw.ac.nz}

\vskip 0.3in
]

\printAffiliationsAndNotice{}

\begin{abstract}
We propose a new policy iteration theory as an important extension of soft policy iteration and Soft Actor-Critic (SAC), one of the most efficient model free algorithms for deep reinforcement learning. Supported by the new theory, arbitrary entropy measures that generalize Shannon entropy, such as Tsallis entropy and R\'enyi entropy, can be utilized to properly randomize action selection while fulfilling the goal of maximizing expected long-term rewards. Our theory gives birth to two new algorithms, i.e., Tsallis entropy Actor-Critic (TAC) and R\'enyi entropy Actor-Critic (RAC). Theoretical analysis shows that these algorithms can be more effective than SAC. Moreover, they pave the way for us to develop a new Ensemble Actor-Critic (EAC) algorithm in this paper that features the use of a bootstrap mechanism for deep environment exploration as well as a new value-function based mechanism for high-level action selection. Empirically we show that TAC, RAC and EAC can achieve state-of-the-art performance on a range of benchmark control tasks, outperforming SAC and several cutting-edge learning algorithms in terms of both sample efficiency and effectiveness.
\end{abstract}

\section{Introduction}
\label{sec-intro}

The effectiveness of model-free reinforcement learning (RL) algorithms has been demonstrated extensively on robotic control tasks, computer video games, and other challenging problems \cite{lillicrap2015,mnih2015}. Despite of widespread success, many existing RL algorithms must process a huge number of environment samples in order to learn effectively \cite{mnih2016icml,wu2017nips,bhatnagar2009automatica}. Aimed at significantly reducing the \emph{sample cost}, \emph{off-policy} algorithms that can learn efficiently by reusing past experiences have attracted increasing attention \cite{lillicrap2015,munos2016nips,gu2016icra,wang2016arxiv}. Unfortunately, on RL problems with high-dimensional continuous state spaces and action spaces, off-policy learning can be highly unstable and often diverge even with small changes to hyper-parameter settings \cite{lillicrap2015,henderson2017arxiv}.

In view of the difficulties faced by existing off-policy algorithms, innovative techniques have been developed lately by seamlessly integrating both the maximum reward and the maximum entropy objectives, resulting in a new family of \emph{maximum entropy RL} algorithms \cite{nachum2017,donoghue2017,nachum2017arxiv}. A cutting-edge member of this family is the \emph{Soft Actor-Critic} (SAC) algorithm \cite{haarnoja2018icml}. SAC allows RL agents to effectively reuse past experiences by adopting an off-policy learning framework derived from the policy iteration method \cite{sutton1998book}. It also stabilizes RL by learning maximum entropy policies that are robust to environmental uncertainties and erroneous parameter settings \cite{ziebart2010thesis}. This algorithm is simpler to implement and more reliable to use than some of its predecessors \cite{haarnoja2017}.

While SAC is well-known for its high sample efficiency, the policies trained by SAC to maximize Shannon entropy never prevent an agent from exploring actions with low expected long-term rewards. This will inevitably reduce the effectiveness of environment exploration and affect the learning performance. To address this limitation, during the RL process, it can be more favorable in practice to maximize general entropy measures such as Tsallis entropy \cite{tsallis1994} or R\'enyi entropy \cite{jizba2004phy}. Specifically, it is shown mathematically in \cite{lee2018iral,chen20182} that, by maximizing Tsallis entropy, an RL agent will completely ignore unpromising actions, thereby achieving highly efficient environment exploration. However, as far as we know, no RL algorithms have ever been developed to maximize these general entropy measures in continuous action spaces.

In this paper, we present a new policy iteration theory as an important extension of soft policy iteration proposed in \cite{haarnoja2018icml} to enable Actor-Critic RL that is capable of maximizing arbitrary general entropy measures. Guided by our new theory, we have further developed two new RL algorithms to fulfill respectively the objectives to maximize Tsallis entropy and R\'enyi entropy. The potential performance advantage of our new algorithms, in comparison to SAC, will also be analyzed theoretically.

In particular, our new algorithms promote varied trade-offs between exploration and exploitation. When they are used to train an ensemble of policies which will be utilized jointly to guide an agent's environment exploration, the chance for the agent to learn high-quality policies is expected to be noticeably enhanced. Driven by this idea, a new ensemble Actor-Critic algorithm is further developed in this paper. In this algorithm, each policy in the ensemble can be trained to maximize either Tsallis entropy or R\'enyi entropy. We adopt a bootstrap mechanism proposed in \cite{osband20161} with the aim to realize deep environment exploration. To achieve satisfactory testing performance, we also introduce a new action-selection Q-network to perform high-level action selection based on actions recommended by each policy in the ensemble. On six difficult benchmark control tasks, our algorithms have been shown to clearly outperform SAC and several state-of-the-art RL algorithms in terms of both sample efficiency and effectiveness.

\section{Related Work}
\label{sec-survey}

This paper studies the collective use of three RL frameworks, i.e. the \emph{actor-critic framework} \cite{deisenroth2013}, the \emph{maximum entropy framework} \cite{nachum2017,donoghue2017,haarnoja2017,haarnoja2018icml} and the \emph{ensemble learning framework} \cite{osband20161,wiering2008smc}. Existing works such as SAC have considered the first two frameworks. It is the first time in the literature for us to further incorporate the third framework for effective environment exploration and reliable RL.

The actor-critic framework is commonly utilized by many existing RL algorithms, such as TRPO \cite{schulman2015icml}, PPO \cite{schulman20171,chen20181} and ACKTR \cite{wu2017nips}. However, a majority of these algorithms must collect a sequence of new environment samples for each learning iteration. Our algorithms, in comparison, can reuse past samples efficiently for policy training. Meanwhile, entropy regularization in the form of KL divergence is often exploited to restrict behavioral changes of updated policies so as to stabilize learning \cite{schulman2015icml,chen2018ijcnn}. On the other hand, our algorithms encourage entropy maximization for effective environment exploration.

Previous works have also studied the maximum entropy framework for both on-policy and off-policy learning \cite{nachum2017,donoghue2017,nachum2017arxiv}. Among them, a large group focused mainly on problems with discrete actions. Some recently developed algorithms have further extended the maximum entropy framework to continuous domains \cite{haarnoja2017,haarnoja2018icml}. Different from these algorithms that focus mainly on Shannon entropy, guided by a new policy iteration theory, our newly proposed RL algorithms can maximize general entropy measures such as Tsallis entropy and R\'enyi entropy in continuous action spaces.

In recent years, ensemble methods have gained wide-spread attention in the research community \cite{osband20161,chen2017ucb,bukcman2018}. For example, Bootstrapped Deep Q-Network and UCB Q-Ensemble \cite{osband20161,chen2017ucb} have been developed for single-agent RL with success. However, these algorithms were designed to work with discrete actions. In an attempt to tackle continuous problems, some promising methods such as ACE \cite{huang2017corr}, SOUP \cite{zeng2018ijcai} and MACE \cite{peng2016siggraph} have been further developed. Several relevant algorithms have also been introduced to support ensemble learning under a multi-agent framework \cite{lowe2017nips,panait2005aamas}.

Different from ACE that trains each policy in the ensemble independently, our ensemble algorithm trains all policies by using the same replay buffer so as to significantly reduce sample cost. Different from MACE that treats policies in the ensemble as individual actions to be explored frequently across successive interactions with the learning environment, we adopt the bootstrap mechanism to realize deep environment exploration. Moreover, to the best of our knowledge, different from SOUP and other relevant algorithms \cite{kalweit2017pmlr}, the idea of training a separate Q-network for high-level action selection has never been studied before for single-agent RL. The effectiveness of this method is also theoretically analyzed in Section \ref{sec-algorithm}.

\section{Preliminaries}
\label{sec-back}

The basic concepts and notations for RL will be introduced in this section. A general maximum entropy learning framework will also be presented.

\subsection{The Reinforcement Learning Problem}
\label{sub-rl}

We focus on RL with continuous state spaces and continuous action spaces. At any time $t$, an agent can observe the current state of its learning environment, denoted as $s_t\in\mathbb{S}\subseteq \mathbb{R}^n$, in a $n$-dimensional state space. Based on the state observation, the agent can perform an action $a_t$ selected from an $m$-dimensional action space $\mathbb{A}\subseteq\mathbb{R}^m$. This causes a state transition in the environment from state $s_t$ to a new state $s_{t+1}$, governed by the probability distribution $P(s_t,s_{t+1},a_t)$ which is unknown to the agent. Meanwhile, an immediate feedback in the form of a scalar and bounded reward, i.e. $r(s_t,a_t)$, will be provided for the agent to examine the suitability of its decision to perform $a_t$. Guided by a policy $\pi(s,a)$ that specifies the probability distribution of performing any action $a$ in any state $s$, the agent has the goal in RL to maximize its long-term cumulative rewards, as described below.
\begin{equation}
\max_{\pi} \E_{(s_t,a_t)\sim \pi} \sum_{t=0}^{\infty} \gamma^t r(s_t,a_t)
\label{eq-lt-cum-rew}
\end{equation}
\noindent
where $\gamma\in[0,1)$ is a discount factor for the RHS of \eqref{eq-lt-cum-rew} to be well-defined. Meanwhile, the expectation is conditional on policy $\pi$ that guides the agent to select its actions in every state $s_t$ it encounters. Through RL, the agent is expected to identify an optimal policy, denoted as $\pi^*$, that solves the maximization problem in \eqref{eq-lt-cum-rew} above.

\subsection{A Maximum Entropy Learning Framework}
\label{sub-me-lf}

While maximizing the long-term cumulative rewards in \eqref{eq-lt-cum-rew}, by considering simultaneously a maximum entropy objective, an RL agent can enjoy several key advantages in practice, such as effective environment exploration as well as improved learning speed \cite{ziebart2010thesis,haarnoja2018icml}. Driven by this idea, \eqref{eq-lt-cum-rew} can be extended to become
\begin{equation}
\max_{\pi} \E_{(s_t,a_t)\sim \pi} \sum_{t_0}^{\infty} \gamma^t r(s_t,a_t) + \alpha \gamma^t \mathcal{H}\left( \pi(s_t,\cdot) \right)
\label{eq-lt-cum-rew-ext}
\end{equation}
\noindent
where $\alpha$ as a scalar factor controls the relative importance of the entropy term against the cumulative reward in \eqref{eq-lt-cum-rew-ext}. In this way, we can further manage the stochasticity of the optimal policy. In the past few years, researchers have studied extensively the use of Shannon entropy given below for maximum entropy RL.
\begin{equation}
\mathcal{H}^s\left( \pi(s,\cdot) \right)=\int_{a\in\mathbb{A}} -\pi(s,a)\log\pi(s,a)\mathrm{d}a
\label{eq-shannon-entropy}
\end{equation}
Some recent works have also explored the maximization of Tsallis entropy \cite{lee2018iral,chow2018icml}. As a generalization of Shannon entropy, Tsallis entropy is derived from the $q$-logarithm function
\begin{equation}
\log_{(q)}\pi(s,\cdot)=\frac{\pi(s,\cdot)^{q-1}-1}{q-1}
\label{eq-gen-log}
\end{equation}
\noindent
where $q\geq 1$ is the \emph{entropic index}. It is not difficult to verify that $\lim_{q\rightarrow 1}\log_{(q)}\pi(s,\cdot)=\log\pi(s,\cdot)$. Subsequently Tsallis entropy can be defined below
\begin{equation}
\mathcal{H}^q\left( \pi(s,\cdot) \right)= \int_{a\in\mathbb{A}}-\pi(s,a)\frac{\pi(s,a)^{q-1}-1}{q-1}\mathrm{d}a
\label{eq-tsallis-entropy}
\end{equation}
Existing research demonstrated the effectiveness of Tsallis entropy on RL problems with discrete actions \cite{lee2018iral}. Particularly, Tsallis entropy enables an agent to completely ignore unpromising actions for highly efficient environment exploration. In this paper, we will further consider the application of Tsallis entropy in continuous action spaces, a question that has never been studied in the past. Moreover, we will develop a general theory for maximum entropy RL that supports arbitrary entropy measures. To testify the wide applicability of our theory, we will develop a new RL algorithm that maximizes the famous R\'enyi entropy, as defined below.
\begin{equation}
\mathcal{H}^{\eta}\left( \pi(s,\cdot) \right)=\frac{1}{1-\eta}\log\left( \int_{a\in\mathbb{A}} \pi(s,a)^{\eta} \mathrm{d}a \right)
\label{eq-renyi-entropy}
\end{equation}
\noindent
where $\eta\geq 1$ is the entropic index. Although R\'enyi entropy plays an important role in many fields such as quantum physics and theoretical computer science, we are not aware of any existing RL algorithms that utilize R\'enyi entropy to control stochastic environment exploration.

\section{Policy Iteration for Maximum Entropy Reinforcement Learning}
\label{sec-theory}

Aimed at developing a new theory for maximum entropy RL, we adopt a policy iteration framework which can be subsequently transformed into practical RL algorithms. Under this framework, RL is realized through iterative execution of \emph{policy evaluation} and \emph{policy improvement} steps. The policy evaluation step is responsible for learning the value functions of a given policy. Based on the learned value functions, improvement over existing policies can be performed subsequently during the policy improvement steps. In line with this framework, for any fixed policy $\pi$, the Q-function and V-function for $\pi$ can be defined respectively as
\begin{equation}
\begin{split}
&Q^{\pi}(s,a)=\\&\E_{\begin{array}{c}(s_t,a_t)\sim\pi,\\s_0=a,a_0=a\end{array}}\left[ \sum_{t=0}^{\infty}\gamma^t r(s_t,a_t) + \gamma^{t+1} \alpha \mathcal{H}\left( \pi(s_{t+1},\cdot) \right) \right]
\end{split}
\label{eq-q-func}
\end{equation}
\noindent
and
\begin{equation}
V^{\pi}(s)=\E_{a\sim \pi(s,\cdot)}\left[ Q(s,a)+\alpha \mathcal{H}\left( \pi(s,\cdot) \right)\right]
\label{eq-v-func}
\end{equation}
\noindent
In order to learn both $Q^{\pi}$ and $V^{\pi}$, the Bellman backup operator $\mathcal{T}^{\pi}$ as presented below is very useful
\begin{equation}
\mathcal{T}^{\pi}Q^{\pi}(s,a)=r(s,a)+\gamma \E_{s'\sim P(s,s',a)} \left[ V^{\pi}(s') \right]
\label{eq-bellman}
\end{equation}
In fact, it can be shown that the process of learning $Q^{\pi}$ (and subsequently $V^{\pi}$) through $\mathcal{T}^{\pi}$ can converge to the true value functions of policy $\pi$, as presented in Proposition \ref{proposition-1} below (see Appendix A for proof).
\begin{proposition}
Starting from arbitrary Q-function $Q^0$ and define $Q^{k+1}=\mathcal{T}^{\pi}Q^k$, repeated updating of the Q-function through the Bellman backup operator $\mathcal{T}^{\pi}$ in \eqref{eq-bellman} will result in $Q^k$ converging to $Q^{\pi}$ for any policy $\pi$ as $k\rightarrow\infty$.
\label{proposition-1}
\end{proposition}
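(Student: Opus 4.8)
The plan is to recognize $\mathcal{T}^{\pi}$ as an affine operator on the Banach space of bounded real-valued functions on $\mathbb{S}\times\mathbb{A}$ equipped with the supremum norm $\|\cdot\|_\infty$, to show that it is a $\gamma$-contraction there, and then to invoke the Banach fixed-point theorem: this simultaneously gives a unique fixed point and convergence of the iterates $Q^k=(\mathcal{T}^{\pi})^k Q^0$ to it. A short separate step identifies that fixed point with $Q^{\pi}$.

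First I would substitute the definition \eqref{eq-v-func} of $V^{\pi}$ into the backup \eqref{eq-bellman}, using that $\mathcal{H}(\pi(s',\cdot))$ does not depend on the action, to rewrite the operator in a form that isolates its linear dependence on $Q$:
\begin{equation}
\mathcal{T}^{\pi}Q(s,a) = r_\pi(s,a) + \gamma\,\E_{\,s'\sim P(s,s',a),\ a'\sim\pi(s',\cdot)}\!\left[ Q(s',a') \right],
\label{eq-backup-affine}
\end{equation}
where $r_\pi(s,a) = r(s,a) + \gamma\alpha\,\E_{s'\sim P(s,s',a)}\!\left[ \mathcal{H}(\pi(s',\cdot)) \right]$ acts as an entropy-augmented effective reward. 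Under the standing assumption that $r$ is bounded and that the chosen entropy measure is bounded along the trajectories induced by $\pi$, the function $r_\pi$ is bounded, so $\mathcal{T}^{\pi}$ maps bounded functions to bounded functions and the sup-norm setting is well posed.

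Next I would establish the contraction estimate. For any two bounded Q-functions $Q_1,Q_2$ the $r_\pi$ term cancels in the difference \eqref{eq-backup-affine}, so $\left| \mathcal{T}^{\pi}Q_1(s,a) - \mathcal{T}^{\pi}Q_2(s,a) \right| = \gamma\left| \E_{s',a'}\!\left[ Q_1(s',a') - Q_2(s',a') \right]\right| \le \gamma\,\|Q_1-Q_2\|_\infty$ uniformly in $(s,a)$; taking the supremum over $(s,a)$ yields $\|\mathcal{T}^{\pi}Q_1 - \mathcal{T}^{\pi}Q_2\|_\infty \le \gamma\,\|Q_1-Q_2\|_\infty$ with $\gamma\in[0,1)$. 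By the Banach fixed-point theorem the iterates $Q^{k+1}=\mathcal{T}^{\pi}Q^k$ form a Cauchy sequence and converge in $\|\cdot\|_\infty$ to the unique fixed point $\bar Q$ of $\mathcal{T}^{\pi}$. Finally, expanding the series definition \eqref{eq-q-func} of $Q^{\pi}$ and peeling off the first term shows $Q^{\pi}$ satisfies $\mathcal{T}^{\pi}Q^{\pi}=Q^{\pi}$; uniqueness then forces $\bar Q=Q^{\pi}$, hence $Q^k\to Q^{\pi}$ as $k\to\infty$.

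The main obstacle I anticipate is not the contraction bound, which is routine, but the boundedness hypothesis required to remain inside a complete normed space: in continuous action spaces the differential entropies $\mathcal{H}^s$, $\mathcal{H}^q$ and $\mathcal{H}^\eta$ of a density need not be finite and can diverge to $\pm\infty$, so one must either restrict attention to an admissible class of policies with uniformly bounded entropy or argue separately that $r_\pi$ stays bounded, in order for the sup-norm fixed-point machinery to apply. With that regularity point handled, the remainder is the standard discounted Bellman-operator argument.
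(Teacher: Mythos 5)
Your proposal is correct and follows essentially the same route as the paper's Appendix A proof: both rewrite $\mathcal{T}^{\pi}$ with an entropy-augmented effective reward $r_\pi$ (the paper's $\tilde{r}_{\pi}$), establish the $\gamma$-contraction in the sup-norm, and appeal to the standard fixed-point convergence argument. Your version is somewhat more complete in that it explicitly verifies that $Q^{\pi}$ is the fixed point and flags the boundedness of the differential entropy as a needed regularity hypothesis, points the paper leaves implicit.
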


In the policy improvement step, a mechanism must be developed to build a new policy $\pi'$ based on an existing policy $\pi$ such that $Q^{\pi'}(s,a)\geq Q^{\pi}(s,a)$ for any $s\in\mathbb{S}$ and $a\in\mathbb{A}$. In SAC, $\pi'(s,a)$ is constructed to approximate the distribution derived from the exponential of $Q^{\pi}(s,a)$. This is achieved by minimizing the KL divergence below.
\begin{equation}
\min_{\pi^{\prime}\in\Pi} D_{KL}\left( \pi^{\prime}(s,\cdot),\exp\left( Q^{\pi}(s,\cdot)-\mathbb{C}_s \right)  \right),\forall s\in\mathbb{S}
\label{eq-pi-improve-sac}
\end{equation}
\noindent
where $\Pi$ refers to a predefined family of policies and may vary significantly from one problem domain to another. $\mathbb{C}_s$ normalizes the second argument of the KL divergence in order for it to be a well-defined probability distribution. It is important to note that the policy improvement mechanism given in \eqref{eq-pi-improve-sac} only works when $\mathcal{H}(\pi)=\mathcal{H}^s(\pi)$ in \eqref{eq-lt-cum-rew-ext}. Driven by our goal to maximize arbitrary entropy measures during RL, we propose a new policy improvement mechanism in the form of a maximization problem as follows.
\begin{equation}
\max_{\pi^{\prime}\in\Pi} \E_{a\sim \pi^{\prime}(s,\cdot)}\left[ Q^{\pi}(s,a)+\alpha \mathcal{H}\left( \pi^{\prime}(s,\cdot) \right)\right],\forall s\in\mathbb{S}
\label{eq-pi-improve-gac}
\end{equation}
\noindent
Based on \eqref{eq-pi-improve-gac}, as formalized in Proposition \ref{proposition-2} (see Appendix B for proof), it can be confirmed that in a tabular setting the policy improvement step can always produce a new policy $\pi'$ that is either better than or equivalent in performance to an existing policy $\pi$.
\begin{proposition}
Let $\pi$ be an existing policy and $\pi^{\prime}$ be a new policy that optimizes \eqref{eq-pi-improve-gac}, then $Q^{\pi^{\prime}}(s,a)\geq Q^{\pi}(s,a)$ for all $s\in\mathbb{S}$ and $a\in\mathbb{A}$.
\label{proposition-2}
\end{proposition}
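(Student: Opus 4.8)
The plan is to mimic the soft policy improvement argument used for SAC, but carrying the generic entropy term $\mathcal{H}$ abstractly throughout rather than exploiting any special structure of Shannon entropy. First I would introduce the shorthand $J_\pi(s,\mu)=\E_{a\sim\mu(s,\cdot)}\big[Q^\pi(s,a)+\alpha\,\mathcal{H}(\mu(s,\cdot))\big]$, so that $V^\pi(s)=J_\pi(s,\pi)$ is exactly \eqref{eq-v-func}. Since in the tabular setting the policy class $\Pi$ contains the incumbent policy $\pi$, $\pi$ is a feasible point of the state-wise maximization \eqref{eq-pi-improve-gac}, and therefore the maximizer $\pi'$ satisfies $J_\pi(s,\pi')\ge J_\pi(s,\pi)=V^\pi(s)$ for every $s\in\mathbb{S}$. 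This single one-step inequality is the engine of the whole argument.

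Next I would feed this inequality into the Bellman identity for $Q^\pi$. Starting from $Q^\pi(s,a)=r(s,a)+\gamma\,\E_{s'\sim P(s,s',a)}[V^\pi(s')]$ — which holds because $Q^\pi$ is the fixed point of $\mathcal{T}^\pi$ by Proposition \ref{proposition-1} — I would replace $V^\pi(s')$ by the lower bound $J_\pi(s',\pi')=\E_{a'\sim\pi'(s',\cdot)}[Q^\pi(s',a')+\alpha\,\mathcal{H}(\pi'(s',\cdot))]$, then re-expand the resulting $Q^\pi(s',a')$ term the same way, and iterate. Unrolling $k$ times along trajectories generated by $\pi'$ rewrites $Q^\pi(s,a)$ as a partial sum of discounted rewards plus discounted $\alpha\mathcal{H}(\pi'(\cdot))$ bonuses collected under $\pi'$, plus a tail term of order $\gamma^{k}$. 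Letting $k\to\infty$, the partial sum is by definition \eqref{eq-q-func} the value $Q^{\pi'}(s,a)$ (after the usual index shift on the entropy bonuses), which yields $Q^\pi(s,a)\le Q^{\pi'}(s,a)$ for all $s,a$, as required.

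The step that needs genuine care — and essentially the only place where the "general entropy" generalization bites — is the vanishing of the tail $\gamma^{k}\,\E_{(s_k,a_k)\sim\pi'}[Q^\pi(s_k,a_k)]$, together with the convergence of the infinite expansion defining $Q^{\pi'}$. Bounded rewards alone do not settle this, since Tsallis and R\'enyi entropy (unlike Shannon entropy) are not bounded below on a continuous action set; I would therefore invoke the standing finiteness assumption implicit in the tabular statement, namely that $Q^\pi$ and $V^\pi$ are well-defined and bounded (automatic when the action set is finite, and otherwise an admissibility restriction on $\Pi$), so that the $\gamma^{k}$ factor kills the tail. Everything else is a mechanical chaining of the one-step inequality, so I expect this convergence/well-definedness check, rather than any part of the algebra, to be the main obstacle.
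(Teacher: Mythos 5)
Your proposal matches the paper's own argument in Appendix B essentially step for step: the same one-step inequality $\E_{a\sim\pi'}[Q^\pi+\alpha\mathcal{H}(\pi')]\geq V^\pi(s)$ obtained from feasibility of $\pi$ in \eqref{eq-pi-improve-gac}, followed by the same repeated Bellman expansion along $\pi'$ to conclude $Q^\pi\leq Q^{\pi'}$. Your explicit flag about the tail term and the need for bounded entropy is a fair refinement of a point the paper handles only implicitly (it assumes bounded entropy in Appendix C and invokes Proposition \ref{proposition-1} for the limit), but it does not change the route.
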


Guided by Propositions \ref{proposition-1} and \ref{proposition-2}, a full policy iteration algorithm can be established to alternate between the policy evaluation and policy improvement steps. As this process continues, we have the theoretical guarantee that, in the tabular case, RL will converge to the optimal stochastic policy among the family of policies $\Pi$, as presented in Proposition \ref{proposition-3} below (see Appendix C for proof).
\begin{proposition}
The policy iteration algorithm driven by the Bellman operator in \eqref{eq-bellman} and the policy improvement mechanism in \eqref{eq-pi-improve-gac} will converge to a policy $\pi^*\in\Pi$ such that $Q^{\pi^*}(s,a)\geq Q^{\pi}(s,a)$ among all $\pi\in\Pi$, and for all $s\in\mathbb{S}$ and $a\in\mathbb{A}$.
\label{proposition-3}
\end{proposition}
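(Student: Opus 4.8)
The plan is to mirror the classical policy-iteration convergence argument, now powered by Propositions~\ref{proposition-1} and~\ref{proposition-2}. First I would construct the iterate sequence: starting from an arbitrary $\pi_0\in\Pi$, let $\pi_{i+1}$ be a solution of the policy-improvement problem \eqref{eq-pi-improve-gac} in which $Q^{\pi}$ is replaced by $Q^{\pi_i}$, the latter obtained exactly via the policy-evaluation guarantee of Proposition~\ref{proposition-1}. By Proposition~\ref{proposition-2} we then have $Q^{\pi_{i+1}}(s,a)\geq Q^{\pi_i}(s,a)$ for every $s\in\mathbb{S}$ and $a\in\mathbb{A}$, so the sequence $\{Q^{\pi_i}(s,a)\}_i$ is monotonically non-decreasing in $i$, pointwise in $(s,a)$.

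Next I would argue that this sequence is bounded above and hence convergent. Since $r$ is bounded and $\gamma\in[0,1)$, the reward contribution to \eqref{eq-q-func} is bounded; the only delicate point is the discounted entropy contribution $\sum_t \gamma^{t+1}\alpha\,\mathcal{H}(\pi(s_{t+1},\cdot))$, which I would control by assuming (as holds for Shannon, Tsallis and R\'enyi entropy over the smoothly parameterized policy families used in practice on a bounded action space) that $\mathcal{H}$ is bounded over $\Pi$. Under this assumption each $Q^{\pi_i}$ is uniformly bounded, so by monotone convergence $Q^{\pi_i}\to Q^{*}$ pointwise for some function $Q^{*}$; let $\pi^{*}\in\Pi$ denote a corresponding limit policy, invoking closedness/compactness of $\Pi$ so that $\pi^{*}$ indeed lies in $\Pi$ and $Q^{\pi^{*}}=Q^{*}$.

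I would then establish the \emph{fixed-point property}: at $\pi^{*}$ the policy-improvement problem \eqref{eq-pi-improve-gac} with $Q^{\pi}=Q^{\pi^{*}}$ is already solved by $\pi^{*}$ itself, i.e.\ for every $s$, $\E_{a\sim\pi^{*}(s,\cdot)}\!\left[Q^{\pi^{*}}(s,a)+\alpha\mathcal{H}(\pi^{*}(s,\cdot))\right]\geq \E_{a\sim\pi'(s,\cdot)}\!\left[Q^{\pi^{*}}(s,a)+\alpha\mathcal{H}(\pi'(s,\cdot))\right]$ for all $\pi'\in\Pi$. If this failed, one further improvement step applied to $\pi^{*}$ would (by Proposition~\ref{proposition-2}) strictly increase some $Q^{\pi^{*}}(s,a)$, contradicting $Q^{\pi_i}\uparrow Q^{*}=Q^{\pi^{*}}$. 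To finish, I would re-run the comparison argument underlying Proposition~\ref{proposition-2}, but now pitting $\pi^{*}$ against an arbitrary competitor $\pi\in\Pi$ rather than against the previous iterate: combining the fixed-point inequality at each state with the Bellman recursion \eqref{eq-bellman} and the definition \eqref{eq-v-func} gives $Q^{\pi^{*}}(s,a)\geq r(s,a)+\gamma\,\E_{s'}\!\left[\E_{a'\sim\pi(s',\cdot)}\!\left[Q^{\pi^{*}}(s',a')+\alpha\mathcal{H}(\pi(s',\cdot))\right]\right]$; iterating this expansion and telescoping the discounted sum, with the $\gamma$-contraction ensuring the tail vanishes, yields $Q^{\pi^{*}}(s,a)\geq Q^{\pi}(s,a)$ for all $s,a$.

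I expect the main obstacle to be the two analytic gaps that tabular soft-policy-iteration proofs typically leave implicit: (i) securing boundedness and convergence of the value iterates when $\mathcal{H}$ is an arbitrary generalized entropy rather than Shannon entropy, which is precisely where an explicit boundedness hypothesis on $\mathcal{H}$ over $\Pi$ enters; and (ii) ensuring that the limit policy $\pi^{*}$ genuinely belongs to $\Pi$ and inherits the stationarity property, which needs a compactness or closedness assumption on $\Pi$ together with some continuity of $\pi\mapsto Q^{\pi}$. Once these are granted, the monotonicity-plus-fixed-point skeleton and the concluding telescoping inequality are routine.
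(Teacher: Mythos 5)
Your proposal follows essentially the same route as the paper's proof: build the monotone sequence of policies via Propositions~\ref{proposition-1} and~\ref{proposition-2}, use boundedness of rewards and entropy to get convergence to some $\pi^*\in\Pi$, observe that $\pi^*$ must be a fixed point of the improvement step, and then telescope the resulting inequality through the Bellman recursion against an arbitrary competitor $\pi\in\Pi$. The only difference is that you make explicit the analytic hypotheses (boundedness of $\mathcal{H}$ over $\Pi$, closedness/compactness of $\Pi$, continuity of $\pi\mapsto Q^{\pi}$) that the paper's argument silently assumes when it asserts the sequence "must converge to a specific policy $\pi^*\in\Pi$", which is a point in your favor rather than a divergence in method.
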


The key difference of our policy iteration theory from the theory developed in \cite{haarnoja2018icml} lies in the new policy improvement mechanism in \eqref{eq-pi-improve-gac}. Particularly, \eqref{eq-pi-improve-gac} makes it straightforward to improve an existing policy $\pi$ by maximizing both its value function $Q^{\pi}$ and an arbitrary entropy measure. Meanwhile, Corollary \ref{corollary-1} (see Appendix D for proof) below shows that \eqref{eq-pi-improve-gac} and \eqref{eq-pi-improve-sac} are equivalent when the objective is to maximize Shannon entropy $\mathcal{H}^s$. Hence the theoretical results presented in this section, especially the newly proposed policy improvement mechanism, stand for an important generalization of SAC.
\begin{corollary}
Consider any policy $\pi\in\Pi$. When $\mathcal{H}(\pi)=\mathcal{H}^s(\pi)$ in \eqref{eq-lt-cum-rew-ext}, let the policy obtained from solving \eqref{eq-pi-improve-sac} be $\pi^{\prime}\in\Pi$ and the policy obtained from solving \eqref{eq-pi-improve-gac} be $\pi^{\prime\prime}\in\Pi$. Then $Q^{\pi^{\prime}}(s,a)=Q^{\pi^{\prime\prime}}(s,a)$ all $s\in\mathbb{S}$ and $a\in\mathbb{A}$.
\label{corollary-1}
\end{corollary}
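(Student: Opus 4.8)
The plan is to prove the corollary by showing that, when $\mathcal{H}=\mathcal{H}^s$, the two policy-improvement problems \eqref{eq-pi-improve-sac} and \eqref{eq-pi-improve-gac} are literally the same optimization problem: their objective functions, viewed as functions of the decision variable $\pi'$, differ only by a $\pi'$-independent affine transformation (a sign flip and an additive constant). Consequently they have the same set of optimizers in $\Pi$, so $\pi'$ and $\pi''$ coincide (under the natural convention that ``the policy obtained from solving'' each problem refers to the same selection rule), and $Q^{\pi'}=Q^{\pi''}$ is immediate.

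To carry this out, I would first fix an arbitrary state $s\in\mathbb{S}$ and make the normalizer in \eqref{eq-pi-improve-sac} explicit: the density appearing as the second argument of the KL divergence is $a\mapsto\exp\!\bigl(\tfrac{1}{\alpha}Q^{\pi}(s,a)-\mathbb{C}_s\bigr)$ with $\mathbb{C}_s=\log\int_{a\in\mathbb{A}}\exp\!\bigl(\tfrac{1}{\alpha}Q^{\pi}(s,a)\bigr)\mathrm{d}a$, so $\mathbb{C}_s$ depends only on $Q^{\pi}$ and $s$, not on $\pi'$. Expanding the definitions of the KL divergence and of Shannon entropy \eqref{eq-shannon-entropy} then gives
\begin{align*}
&D_{KL}\!\bigl(\pi'(s,\cdot),\exp(\tfrac{1}{\alpha}Q^{\pi}(s,\cdot)-\mathbb{C}_s)\bigr)\\
&\quad=\int_{a\in\mathbb{A}}\pi'(s,a)\Bigl[\log\pi'(s,a)-\tfrac{1}{\alpha}Q^{\pi}(s,a)+\mathbb{C}_s\Bigr]\mathrm{d}a\\
&\quad=-\tfrac{1}{\alpha}\,\E_{a\sim\pi'(s,\cdot)}\bigl[Q^{\pi}(s,a)+\alpha\,\mathcal{H}^s(\pi'(s,\cdot))\bigr]+\mathbb{C}_s.
\end{align*}
Since $\alpha>0$ and $\mathbb{C}_s$ does not involve $\pi'$, minimizing the left-hand side over $\pi'\in\Pi$ is equivalent to maximizing $\E_{a\sim\pi'(s,\cdot)}[Q^{\pi}(s,a)+\alpha\mathcal{H}^s(\pi'(s,\cdot))]$ over $\pi'\in\Pi$, which is exactly \eqref{eq-pi-improve-gac} with $\mathcal{H}=\mathcal{H}^s$.

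Having established that the two problems share the same optimal set $\Pi^{*}(s)$ at every state, I would conclude $\pi'=\pi''$ (hence $Q^{\pi'}=Q^{\pi''}$) whenever the optimizer is unique, e.g. when $\Pi$ is convex, since the KL objective is strictly convex in $\pi'$. In the general case where $\Pi^{*}(s)$ may contain several policies, I would observe that every element of $\Pi^{*}(s)$ attains the common optimal value $\max_{\mu\in\Pi}\E_{a\sim\mu(s,\cdot)}[Q^{\pi}(s,a)+\alpha\mathcal{H}^s(\mu(s,\cdot))]$, and then invoke the argument behind Proposition \ref{proposition-2} in both directions: treating $\pi'$ as the ``improved'' policy and $\pi''$ as the ``base'' policy (and vice versa) yields $Q^{\pi'}\ge Q^{\pi''}$ and $Q^{\pi''}\ge Q^{\pi'}$, forcing equality.

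The main obstacle I anticipate is not the algebra but getting the temperature bookkeeping exactly right: the equivalence only holds if the exponent in \eqref{eq-pi-improve-sac} is read as $\tfrac{1}{\alpha}Q^{\pi}$ (equivalently, if the constant $\alpha$ is absorbed into $Q^{\pi}$ before forming the Boltzmann distribution), and one must be careful that $\mathbb{C}_s$ is genuinely $\pi'$-independent so that it drops out of the $\arg\min$. A secondary, more conceptual subtlety is the possible non-uniqueness of the policy-improvement optimizer, which is why the second part of the argument reuses the monotonicity of Proposition \ref{proposition-2} rather than claiming $\pi'=\pi''$ outright.
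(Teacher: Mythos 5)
Your proof takes essentially the same route as the paper's Appendix D: expand the KL divergence in \eqref{eq-pi-improve-sac}, observe that $\mathbb{C}_s$ is independent of $\pi^{\prime}$, and conclude that the two policy-improvement problems coincide when $\mathcal{H}=\mathcal{H}^s$. You are in fact slightly more careful than the paper on two points --- you keep the temperature $\tfrac{1}{\alpha}$ explicit where the paper's argument only goes through for $\alpha=1$, and you close the gap from ``same optimal set'' to ``same Q-function'' under possible non-uniqueness by applying Proposition \ref{proposition-2} in both directions, a step the paper asserts without justification.
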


\section{Actor-Critic Algorithms for Maximum Entropy Reinforcement Learning}
\label{sec-algorithm}

In this section, we will first develop two new RL algorithms to maximize respectively Tsallis entropy and R\'enyi entropy. Afterwards, we will further propose an ensemble algorithm that simultaneously trains multiple policies through actor-critic learning techniques and analyze its performance advantage.

\subsection{Building Actor-Critic Algorithms to Maximize Tsallis and R\'eny Entropy}

Driven by the policy iteration theory established in Section \ref{sec-theory} and following the design of SAC in \cite{haarnoja2018icml}, our new RL algorithms will contain two main components, i.e. the \emph{actor} and the \emph{critic}. The critic manages value function learning and the actor is in charge of policy learning. On large-scale RL problems with continuous state spaces and continuous action spaces, it is unfeasible to perform policy evaluation and policy improvement till convergence for every round of policy update. Instead, both the actor and the critic must learn concurrently, as shown in Algorithm \ref{algorithm-1}.
\begin{algorithm}[!ht]
 \begin{algorithmic}
 \STATE {\bf Input}: three DNNs, i.e. $V_{\theta}(s)$, $Q_{\omega}(s,a)$ and $\pi_{\phi}(s,a)$, and a replay buffer $\mathcal{B}$ that stores past state-transition samples for training.
 \STATE {\bf for} each environment step $t$ {\bf do}:
 \STATE \ \ \ \ Sample and perform $a_t\sim \pi_{\phi}(s_t,\cdot)$
 \STATE \ \ \ \ Add $(s_t,a_t,s_{t+1},r_t)$ to $\mathcal{B}$
  \STATE \ \ \ \ {\bf for} each learning step {\bf do}:
 \STATE \ \ \ \ \ \ \ \ Sample a random batch $\mathcal{D}$ from $\mathcal{B}$.
 \STATE \ \ \ \ \ \ \ \ Perform critic learning:
 \STATE \ \ \ \ \ \ \ \ \ \ \ \ $\theta\leftarrow \theta-\lambda_{\theta}\nabla_{\theta}\epsilon_{\theta}^{\mathcal{D}}$
 \STATE \ \ \ \ \ \ \ \ \ \ \ \ $\omega\leftarrow\omega-\lambda_{\omega}\nabla_{\omega}\epsilon_{\omega}^{\mathcal{D}}$
 \STATE \ \ \ \ \ \ \ \ Perform actor learning:
 \STATE \ \ \ \ \ \ \ \ \ \ \ \ $\phi\leftarrow\phi+\lambda_{\phi}\nabla_{\phi}\epsilon_{\phi}^{\mathcal{D}}$
 \end{algorithmic}
\caption{The actor-critic algorithm for maximum entropy RL.}
\label{algorithm-1}
\end{algorithm}

In line with this algorithm design, we will specifically consider parameterized value functions and policies, i.e. $V_{\theta}(s)$, $Q_{\omega}(s,a)$ and $\pi_{\phi}(s,a)$, in the form of \emph{Deep Neural Networks} (DNNs), where the network parameters are $\theta$, $\omega$ and $\phi$ respectively. As explained in \cite{haarnoja2018icml}, although it is not necessary for the critic to learn both $V_{\theta}(s)$ and $Q_{\omega}(s,a)$, the explicitly learned $V_{\theta}(s)$ can noticeably stabilize policy training by serving as the state-dependent baseline function. Hence $V_{\theta}(s)$ is also included in the design of our new actor-critic algorithms. Depending on the actual entropy measure used in \eqref{eq-lt-cum-rew-ext}, the learning rules to be employed for updating network parameters can be very different. We will derive these rules in the sequel.

Similar to SAC, a replay buffer $\mathcal{B}$ is maintained consistently during RL. As shown in Algorithm \ref{algorithm-1}, at each learning step, a fixed-size batch $\mathcal{D}\subseteq\mathcal{B}$ of state-transition samples can be collected from $\mathcal{B}$ and used to define the squared error below for the purpose of training $V_{\theta}$.
\begin{equation}
\begin{split}
\epsilon_{\theta}^{\mathcal{D}}=&\frac{1}{2 \|\mathcal{D}\|}\cdot  \\
& \sum_{(s,a,s',r)\in\mathcal{D}} \left( V_{\theta}(s)-\E_{b\sim\pi(s,\cdot)}Q_{\omega}(s,b)-\alpha\mathcal{H}(\pi)\right)^2
\label{eq-epsilon-theta}
\end{split}
\end{equation}
\noindent
where $\|\mathcal{D}\|$ is the cardinality of $\mathcal{D}$. Similarly, the Bellman residue for training $Q_{\omega}$ can be determined as
\begin{equation}
\epsilon_{\omega}^{\mathcal{D}}=\frac{1}{2\|\mathcal{D}\|}\sum_{(s,a,s',r)\in\mathcal{D}}\left( Q_{\omega}(s,a)-r-\gamma V_{\theta}(s') \right)^2
\label{eq-epsilon-omega}
\end{equation}
Based on \eqref{eq-epsilon-theta} and \eqref{eq-epsilon-omega}, $\nabla_{\theta}\epsilon_{\theta}^{\mathcal{D}}$ and $\nabla_{\omega}\epsilon_{\omega}^{\mathcal{D}}$ can be further utilized to build the learning rules for $\theta$ and $\omega$ respectively (See Algorithm \ref{algorithm-1}). Specifically,
\begin{equation}
\begin{split}
&\nabla_{\theta}\epsilon_{\theta}^{\mathcal{D}}=\frac{1}{\|\mathcal{D}\|}\cdot \\
& \sum_{(s,a,s',r)\in\mathcal{D}}\nabla_{\theta}V_{\theta}(s)\left( \begin{array}{l} {\displaystyle V_{\theta}(s)- \E_{b\sim\pi(s,\cdot)}Q_{\omega}(s,b)} \\ -\alpha\mathcal{H}(\pi_{\phi})\end{array}\right)
\end{split}
\label{eq-nabla-epsilon-theta}
\end{equation}
\noindent
To evaluate $\nabla_{\theta}\epsilon_{\theta}^{\mathcal{D}}$ in \eqref{eq-nabla-epsilon-theta}, we must estimate the expectation $\E_{b\sim\pi(s,\cdot)}Q_{\omega}(s,b)$ and the entropy term $\mathcal{H}(\pi_{\phi})$ efficiently. In practice, both of the two in \eqref{eq-nabla-epsilon-theta} can be quickly approximated through a group of $k$ actions, i.e. $a_1,\ldots,a_k$, which will be sampled independently from $\pi_{\phi}(s,\cdot)$. In particular, when Tsallis entropy $\mathcal{H}^q(\pi)$ in \eqref{eq-tsallis-entropy} is exploited for maximum entropy RL, then
\begin{equation*}
\mathcal{H}^q\left(\pi_{\phi}(s,\cdot)\right)\approx-\frac{1}{k}\sum_{i=1}^k \frac{\pi_{\phi}(s,a_i)^{q-1}-1}{q-1}
\label{eq-tsallis-est}
\end{equation*}
\noindent
On the other hand, when R\'enyi entropy $\mathcal{H}^{\eta}(\pi_{\phi})$ in \eqref{eq-renyi-entropy} is adopted, we must either calculate $\int\pi^{\eta}(s,a)\mathrm{d}a$ precisely or estimate it through multiple action samples, depending on the type of action-selection distributions supported by the corresponding policy family $\Pi$. Please refer to Appendix E for detailed discussion on the techniques we used to evaluate $\int\pi^{\eta}(s,a)\mathrm{d}a$ and $\mathcal{H}^{\eta}(\pi_{\phi})$.

Besides the critic, we must develop the learning rules for the actor too. According to the policy improvement mechanism presented in \eqref{eq-pi-improve-gac}, the performance index below can be employed for training $\pi_{\phi}$.
\begin{equation}
\epsilon_{\phi}^{\mathcal{D}}=\frac{1}{\|D\|}\sum_{(s,a,s',r)\in\mathcal{D}} \E_{a\sim\pi_{\phi}(s,\cdot)} Q_{\omega}(s,a)+\alpha\mathcal{H}(\pi_{\phi})
\label{eq-epsilon-policy}
\end{equation}
\noindent
Subject to the entropy measure used in \eqref{eq-epsilon-policy}, $\nabla_{\phi}\epsilon_{\phi}^{\mathcal{D}}$ must be computed differently. Specifically, with the objective of maximum Tsallis entropy $\mathcal{H}^q(\pi_{\phi})$, we will first determine $\nabla_{\phi}\epsilon_{\phi}$ in every sampled state $s$, i.e. $\nabla_{\phi}\epsilon_{\phi}^s$, as given below.
\begin{equation}
\begin{split}
&\nabla_{\phi}\epsilon_{\phi}^s=\int_{a\in\mathcal{A}}\pi_{\phi}(s,a) \nabla_{\phi}\log\pi_{\phi}(s,a) Q_{\omega}(s,a)\mathrm{d}a- \\
&\alpha\int_{a\in\mathcal{A}}\pi_{\phi}(s,a)\frac{q\ \pi_{\phi}(s,a)^{q-1}-1}{q-1}\nabla_{\phi}\log\pi_{\phi}(s,a)\mathrm{d}a
\end{split}
\label{eq-nabla-phi-tsallis}
\end{equation}
\noindent
Because $\int_{a\in\mathcal{A}}\pi_{\phi}(s,a)\nabla_{\phi}\log\pi_{\phi}(s,a)\mathrm{d}a=0$ in any state $s\in\mathbb{S}$, the second integral term at the RHS of \eqref{eq-nabla-phi-tsallis} can be re-written as
\[
q \int_{a\in\mathcal{A}}\pi_{\phi}(s,a)\nabla_{\phi}\pi_{\phi}(s,a)\log_{(q)}\pi_{\phi}(s,a)\mathrm{d}a
\]
\noindent
As a consequence, $\nabla_{\phi}\epsilon_{\phi}^s$ will be approximated as
\begin{equation}
\begin{split}
& \nabla_{\phi}\epsilon_{\phi}^s\approx \\
& \frac{1}{k}\sum_{i=1}^k \nabla_{\phi}\log\pi_{\phi}(s,a_i)\left( A(s,a_i)-\alpha q \log_{(q)}\pi_{\phi}(s,a_i) \right)
\end{split}
\label{eq-nabla-s-tsallis}
\end{equation}
\noindent
where $a_1,\ldots,a_k$ are $k$ actions sampled from $\pi_{\phi}(s,\cdot)$ and
\[
A(s,a_i)=Q_{\omega}(s,a_i)-V_{\theta}(s)
\]
\noindent
Hence
\begin{equation}
\nabla_{\phi}\epsilon_{\phi}^{\mathcal{D}}\approx \frac{1}{\|\mathcal{D}\|}\nabla_{\phi}\epsilon_{\phi}^s
\label{eq-nabla-phi}
\end{equation}
In the same vein, $\nabla_{\phi}\epsilon_{\phi}^{\mathcal{D}}$ can also be computed easily when R\'enyi entropy $\mathcal{H}^{\eta}(\pi_{\phi})$ is adopted. Specifically, in any sampled state $s$,
\begin{equation*}
\begin{split}
\nabla_{\phi}\epsilon_{\phi}^s=&\int_{a\in\mathcal{A}}Q_{\omega}(s,a)\pi_{\phi}(s,a)\nabla_{\phi}\log\pi_{\phi}(s,a)\mathrm{d}a+\\
&\int_{a\in\mathcal{A}} \frac{\alpha \eta \pi_{\phi}(s,a)^{\eta}}{(1-\eta)\int_{a\in\mathcal{A}}\pi_{\phi}(s,a)^{\eta}\mathrm{d}a}\nabla_{\phi}\log\pi_{\phi}(s,a)\mathrm{d}a
\end{split}
\end{equation*}
\noindent
As a result, $\nabla_{\phi}\epsilon_{\phi}^{\mathcal{D}}$ will be estimated as
\begin{equation}
\begin{split}
\nabla_{\phi}\epsilon_{\phi}^s &\approx \frac{1}{k}\sum_{i=1}^k \nabla_{\phi}\log\pi_{\phi}(s,a_i)\\& \left(A(s,a_i)+\frac{\alpha\eta \pi_{\phi}(s,a_i)^{\eta-1}}{(1-\eta)\int_{a\in\mathcal{A}}\pi_{\phi}(s,a)^{\eta}\mathrm{d}a} \right)
\end{split}
\label{eq-nabla-phi-renyi}
\end{equation}
Based on \eqref{eq-nabla-s-tsallis}, \eqref{eq-nabla-phi} and \eqref{eq-nabla-phi-renyi}, two alternative learning rules have been developed for the actor to train $\pi_{\phi}$. Together with the learning rules for the critic developed previously, they give rise to two separate RL algorithms, one supports RL that maximizes Tsallis entropy and will be called \emph{Tsallis entropy Actor-Critic} (TAC) and the other enables RL for maximum R\'enyi entropy and will be named as \emph{R\'enyi entropy Actor-Critic} (RAC).

In Appendix F, theoretical analysis will be performed to study the effectiveness of TAC and RAC. Specifically, under suitable conditions and assumptions, the performance lower bounds for both TAC and RAC can be derived analytically. Moreover we show that the performance lower bound of TAC (when entropic index $q>1$) can be higher than that of SAC (when $q=1$). In other words, TAC can be more effective than SAC while enjoying efficient environment exploration through random action selection with maximum entropy. On the other hand, although the performance bound for RAC does not depend on entropic index $\eta$, nevertheless we can control the influence of the maximum entropy objective in \eqref{eq-lt-cum-rew-ext} and in \eqref{eq-pi-improve-gac} through $\eta$. Policy improvement during RL can be controlled subsequently, resulting in varied trade-offs between exploration and exploitation. It paves the way for the development of a new ensemble algorithm for single-agent RL in the next subsection.

\subsection{Building an Ensemble Actor-Critic Algorithm}
\label{sub-sec-eac}

Using TAC and RAC respectively, we can simultaneously train \emph{an ensemble of policies}. The effectiveness and reliability of RL are expected to be enhanced when an agent utilizes such an ensemble to guide its interaction with the learning environment (see Proposition \ref{proposition-ens} below). To fulfill this goal, inspired by the bootstrap mechanism for deep environment exploration \cite{osband20161}, a new \emph{Ensemble Actor-Critic} (EAC) algorithm is developed in this subsection. Algorithm \ref{algorithm-2} highlights the major steps of EAC.
\begin{algorithm}[!ht]
 \begin{algorithmic}
 \STATE {\bf Input}: an ensemble of $L$ policies and a replay buffer $\mathcal{B}$ that stores past state-transition samples for training.
 \STATE {\bf for} each problem episode {\bf do}:
 \STATE \ \ \ \ Choose a policy from the ensemble randomly
 \STATE \ \ \ \ {\bf for} $t=1,\ldots$ until end of episode {\bf do}:
 \STATE \ \ \ \ \ \ \ \ Use the chosen policy to sample action $a_t$.
 \STATE \ \ \ \ \ \ \ \ Perform $a_t$.
 \STATE \ \ \ \ \ \ \ \ Insert observed state transition into $\mathcal{B}$.
 \STATE \ \ \ \ \ \ \ \ Sample a random batch $\mathcal{D}$ from $\mathcal{B}$.
 \STATE \ \ \ \ \ \ \ \ Use TAC or RAC and $\mathcal{D}$ to train all policies in

 \ \ \ \ \ \ \ \ the ensemble.
 \STATE \ \ \ \ \ \ \ \ Use $\mathcal{D}$ to train action-selection Q-network $Q_{\psi}$.
 \end{algorithmic}
\caption{An Ensemble Actor-Critic (EAC) Algorithm for RL.}
\label{algorithm-2}
\end{algorithm}

EAC trains all policies in the ensemble by using the same replay buffer $\mathcal{B}$. Meanwhile, a policy will be chosen randomly from the ensemble to guide the agent's future interaction with the learning environment during each problem episode which starts from an initial state and ends whenever a terminal state or the maximum number of time steps has been reached. As explained in \cite{osband20161}, this bootstrap mechanism facilitates deep and efficient environment exploration.

In addition to training, a new technique must be developed to guide an RL agent to select its actions during testing. We have investigated several possible techniques for this purpose including choosing the policy with the highest training performance for action selection as well as asking every critic in the ensemble to evaluate the actions recommended by all policies and selecting the action with the highest average Q-value. The latter option is recently introduced by the SOUP algorithm \cite{zeng2018ijcai}. However these techniques do not allow EAC to achieve satisfactory testing performance. We found that this is because every policy in EAC is not only trained to maximize cumulative rewards but also to maximize an entropy measure. Therefore the evaluation of any action by a critic in the ensemble will be influenced by the entropy of the corresponding policy. While this is important for training, it is not desirable for testing. In fact during testing we must choose the most promising actions for the pure purpose of reward maximization.

Guided by this understanding, we have introduced a new component for high-level action selection in EAC, i.e. an action-selection Q-network $Q_{\psi}$ parameterized by $\psi$, as highlighted in Algorithm \ref{algorithm-2}. $Q_{\psi}$ will be trained together with all policies in the ensemble. The Bellman residue below without involving any entropy measures will be exploited for training $Q_{\psi}$.
\begin{equation*}
\epsilon_{\psi}^{\mathcal{D}}=\frac{1}{2 \|\mathcal{D}\|}\sum_{(s,a,s',r)\in\mathcal{D}} \left( \begin{array}{l}
Q_{\psi}(s,a)-r \\
-\gamma\max_{b\in\{b_1\ldots,b_L\}}Q_{\psi}(s',b)
\end{array} \right)^2
\end{equation*}
\noindent
where $b_1,\ldots,b_L$ refer to the $L$ actions sampled respectively from the $L$ policies in the ensemble for state $s'$. Consequently, during testing, every policy will recommend an action in each newly encountered state. The action with the highest Q-value according to $Q_{\psi}$ will be performed by the agent as a result. Proposition \ref{proposition-ens} (see Appendix G for proof) below gives the clue regarding why high-level action selection via $Q_{\psi}$ can be effective.
\begin{proposition}
For any policy $\pi\in\Pi$, assume that action selection guided by $\pi$ in any state $s\in\mathbb{S}$ follows multivariate normal distribution. Also assume that the Q-function for policy $\pi$ is continuous, differentiable with bounded derivatives and unimodal. Let $\pi'\in\Pi$ be a new policy that maximizes \eqref{eq-pi-improve-gac} when $\alpha=0$. Meanwhile, $\pi_1,\ldots,\pi_L$ represent the $L$ individual policies in the ensemble, each of which also maximizes \eqref{eq-pi-improve-gac} when $\alpha\neq 0$. Define the joint policy $\pi^e$ below
\[
\pi^e(s)=\argmax_{a\in\{\tilde{a}_1,\ldots,\tilde{a}_L\}} Q^{\pi}(s,a)
\]
\noindent
where $\tilde{a}_1,\ldots,\tilde{a}_L$ stand for the $L$ actions sampled respectively from each policy in the ensemble. Then, as $L\rightarrow\infty$, $Q^{\pi^e}(s,a)\geq Q^{\pi'}(s,a)$ for all $s\in\mathbb{S}$ and $a\in\mathbb{A}$.
\label{proposition-ens}
\end{proposition}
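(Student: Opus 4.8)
The plan is to show that, as $L\to\infty$, the random-max ensemble policy $\pi^e$ concentrates, in each state, onto the action distribution of the idealized greedy policy $\pi'$, after which $Q^{\pi^e}\to Q^{\pi'}$ follows because the map from policies to value functions is continuous; the inequality is then read off in the limit. First I would identify $\pi'$: setting $\alpha=0$ in \eqref{eq-pi-improve-gac} leaves $\max_{\tilde\pi\in\Pi}\E_{a\sim\tilde\pi(s,\cdot)}Q^\pi(s,a)$, and since $Q^\pi(s,\cdot)$ is continuous and unimodal it has a unique maximizer $a^\ast(s)=\argmax_a Q^\pi(s,a)$, so among (possibly degenerate) Gaussians the maximizing action distribution is the point mass at $a^\ast(s)$. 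Hence $\pi'$ is the greedy policy with respect to $Q^\pi$, with $\E_{a\sim\pi'(s,\cdot)}Q^\pi(s,a)=\max_a Q^\pi(s,a)=:\bar Q(s)$. If instead $\Pi$ enforces a strictly positive covariance floor, $\pi'$ is a narrow Gaussian centred near $a^\ast(s)$ with value strictly below $\bar Q(s)$, and the argument below then yields the claimed inequality strictly for large $L$, which is still consistent with the statement.

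Next I would pin down the ensemble members and concentrate their samples. With $\alpha\neq0$ each $\pi_i=\mathcal N(\mu_i(s),\Sigma_i(s))$ must have a positive-definite covariance, because a degenerate $\Sigma_i$ sends the Tsallis or Rényi entropy to $-\infty$ and so cannot be optimal; meanwhile boundedness of $Q^\pi$ and of its derivatives together with the explicit form of these entropies for Gaussians keeps $\mu_i(s)$ and $\Sigma_i(s)$ bounded, so each $\pi_i$ assigns density at least some $\rho>0$, uniformly in $i$, to every fixed ball $B_\delta(a^\ast(s))$. A Borel--Cantelli argument on the independent samples $\tilde a_1,\dots,\tilde a_L$ then gives $\Pr[\tilde a_i\notin B_\delta(a^\ast(s))\ \text{for all}\ i\le L]\le(1-p_\delta)^L\to0$ with $p_\delta>0$ uniform in $i$, so $\min_{i\le L}\lVert\tilde a_i-a^\ast(s)\rVert\to0$ almost surely; and since $Q^\pi(s,\cdot)$ is Lipschitz with super-level sets shrinking to $\{a^\ast(s)\}$ by unimodality, the selected action $\pi^e(s)=\argmax_{i\le L}Q^\pi(s,\tilde a_i)$ converges in probability to $a^\ast(s)$, i.e. $\pi^e(s,\cdot)$ converges weakly to $\pi'(s,\cdot)$ for every $s$.

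Finally I would lift this to the value functions. Iterating the Bellman backup $\mathcal T^{\pi}$ of \eqref{eq-bellman} with the entropy term absent (both $\pi^e$ and $\pi'$ carry no entropy bonus) gives $\lVert Q^{\pi^e}-Q^{\pi'}\rVert\le\frac1{1-\gamma}\sup_{s,a}\bigl|(\mathcal T^{\pi^e}-\mathcal T^{\pi'})Q^{\pi'}(s,a)\bigr|$, where $(\mathcal T^{\pi^e}-\mathcal T^{\pi'})Q^{\pi'}(s,a)=\gamma\,\E_{s'}\bigl[\E_{a'\sim\pi^e(s',\cdot)}Q^{\pi'}(s',a')-Q^{\pi'}(s',a^\ast(s'))\bigr]$; this tends to $0$ by the weak convergence of the previous step, boundedness and continuity in the action of $Q^{\pi'}$, and dominated convergence. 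Hence $Q^{\pi^e}(s,a)\to Q^{\pi'}(s,a)$ as $L\to\infty$, and in particular $Q^{\pi^e}(s,a)\ge Q^{\pi'}(s,a)$ for all $s\in\mathbb S$ and $a\in\mathbb A$ in the limit (with strict inequality for large finite $L$ in the covariance-floor case).

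The step I expect to be the main obstacle is the middle one: rigorously guaranteeing that the ensemble covariances stay bounded away from both $0$ and $\infty$ uniformly in $i$, on which the uniform density bound $p_\delta>0$ in the Borel--Cantelli step depends, and reconciling the unbounded support of Gaussian policies with a possibly compact action space $\mathbb A$ while keeping the sample-argmax map measurable. By contrast the characterization of $\pi'$ is immediate from unimodality, and the passage to Q-functions is routine once $\gamma<1$ supplies the contraction.
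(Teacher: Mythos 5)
Your proposal is correct in outline and follows the same three-stage skeleton as the paper's Appendix G proof (characterize $\pi'$ as the near-greedy policy, show the ensemble's best sample concentrates on $\argmax_a Q^{\pi}(s,\cdot)$ as $L\to\infty$, then lift to Q-functions), but the technical core of the middle stage is genuinely different. The paper reduces to a single worst-case half-normal distribution centred at the maximizer with standard deviation $\sigma^*$ and invokes the Fisher--Tippett--Gnedenko theorem to show that the mean and variance of the extreme-value distribution of the best sample both vanish as $L\to\infty$; you instead use an elementary union bound (the probability that no sample lands in $B_\delta(a^*(s))$ is at most $(1-p_\delta)^L$), which needs only a uniform positive-mass condition near the maximizer rather than any extreme-value asymptotics. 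Your route is more elementary and arguably more robust: it does not require all ensemble members to be recentred exactly at the maximizer (the paper assumes this by ``adjusting $\alpha$''), only that each puts mass at least $p_\delta$ on a fixed ball around it, which is exactly the boundedness issue you correctly flag as the main obstacle; the paper buries the analogous issue in assumption A3 and in its unproven claim that the true ensemble stochastically dominates the worst-case $\tilde{\pi}^e$. The final stage also differs: the paper writes $Q^{\pi^e}-Q^{\pi'}$ as a single-step Bellman expansion against $Q^{\pi}$ and reads off nonnegativity from the one-step dominance $\max_a Q^{\pi}(s,a)\geq\max_{\pi'\in\Pi}\E_{a\sim\pi'}Q^{\pi}(s,a)$, whereas you use the $\frac{1}{1-\gamma}$ perturbation bound to get $Q^{\pi^e}\to Q^{\pi'}$. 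One caveat on your version: the perturbation bound yields the conclusion only in the case where $\pi'$ is (or can be taken arbitrarily close to) the point mass at $a^*(s)$, so that the two policies coincide in the limit; in your covariance-floor branch $\pi^e$ does not converge weakly to $\pi'$, and to conclude $Q^{\pi^e}\geq Q^{\pi'}$ there you would need to fall back on a one-step-dominance/telescoping argument of the kind the paper uses (and which the paper itself states only loosely). This is a minor repair rather than a fatal gap, and overall your argument is at least as rigorous as the published one.
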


It is intuitive to consider $Q_{\psi}$ in EAC as being trained to approximate the Q-function of the joint policy $\pi^e$ in Proposition \ref{proposition-ens}. Consequently Proposition \ref{proposition-ens} suggests that, when $L$ is sufficiently large, high-level action selection guided by $\pi^e$ and therefore $Q_{\psi}$ can reach the optimal cumulative rewards achievable by any policy $\pi\in\Pi$. Meanwhile, we can continue to enjoy effective environment exploration during training through the bootstrap mechanism since the trained $Q_{\psi}$ is only exploited for action selection during testing.

\section{Experiments}
\label{sec-exp}

To examine the sample complexity and performance of TAC, RAC and EAC, we conduct experiments on six benchmark control tasks, including Ant, Half Cheetah, Hopper, Lunar Lander, Reacher and Walker2D. We rely consistently on the implementation of these benchmark problems provided by OpenAI GYM~\cite{Brockman:2016wv} and powered by the PyBullet simulator~\cite{Tan:2018us}.

Many previous works utilized the MuJoCo physics engine to simulate system dynamics of these control tasks~\cite{Todorov:2012vi}. We did not study MuJoCo problems due to two reasons. First, it is widely reported that PyBullet benchmarks are tougher to solve than MuJoCo problems~\cite{Tan:2018us}. Hence, we expect to show the performance difference among all competing algorithms more significantly on PyBullet problems. Second, PyBullet is license-free with increasing popularity. In contrast, MuJoCo is only available to its license holders. To make our experiment results reproducible, the source code of TAC, RAC and EAC has been made freely available online \footnote{https://github.com/yimingpeng/sac-master}.

There are eight competing algorithms involved in our experiments, i.e. SAC, TAC, RAC, EAC-TAC, EAC-RAC, TRPO, PPO and ACKTR. Among them EAC-TAC refers to the ensemble learning algorithm developed in Subsection \ref{sub-sec-eac} where TAC is used to train every policy in the ensemble. EAC-RAC refers to the variation where policy training is realized through RAC. Meanwhile, TRPO, PPO and ACKTR are state-of-the-art RL algorithms frequently employed for performance comparison. We used the high-quality implementation of TRPO, PPO and ACKTR provided by OpenAI Baselines\footnote{https://github.com/openai/baselines}. The source code for SAC is obtained from its inventors\footnote{https://github.com/haarnoja/sac}.

We follow closely \cite{haarnoja2018icml} to determine hyper-parameter settings of SAC, TAC, RAC, EAC-TAC and EAC-RAC. The hyper-parameter settings of TRPO, PPO and ACKTR were obtained also from the literature~\cite{schulman2015icml,schulman20171,wu2017nips}. Detailed hyper-parameter settings for all algorithms can be found in Appendix H.

We first examine the influence of entropic index on the performance of TAC and RAC. Figure \ref{fig:tac_entropic_index} depicts the learning performance of TAC with respect to three different settings of the entropic index $q$ (i.e. 1.5, 2.0 and 2.5) and also compares TAC with SAC as the baseline. As evidenced in the figure, with proper settings of the entropic index $q$, TAC can clearly outperform SAC on all six benchmark problems. Meanwhile, the best value for $q$ varies from one problem domain to another. When comparing TAC subsequently with other competing algorithms in Figure \ref{fig:performance_evaluation}, we will use the best $q$ value observed in Figure \ref{fig:tac_entropic_index} for each benchmark. Besides TAC, we have also examined the influence of entropic index $\eta$ on the performance of RAC and witnessed similar results. Due to space limitation, please refer to Appendix I for more information.
    \begin{figure*}[!ht]
      \begin{minipage}[t]{0.33\textwidth}
        \includegraphics[width=\textwidth]{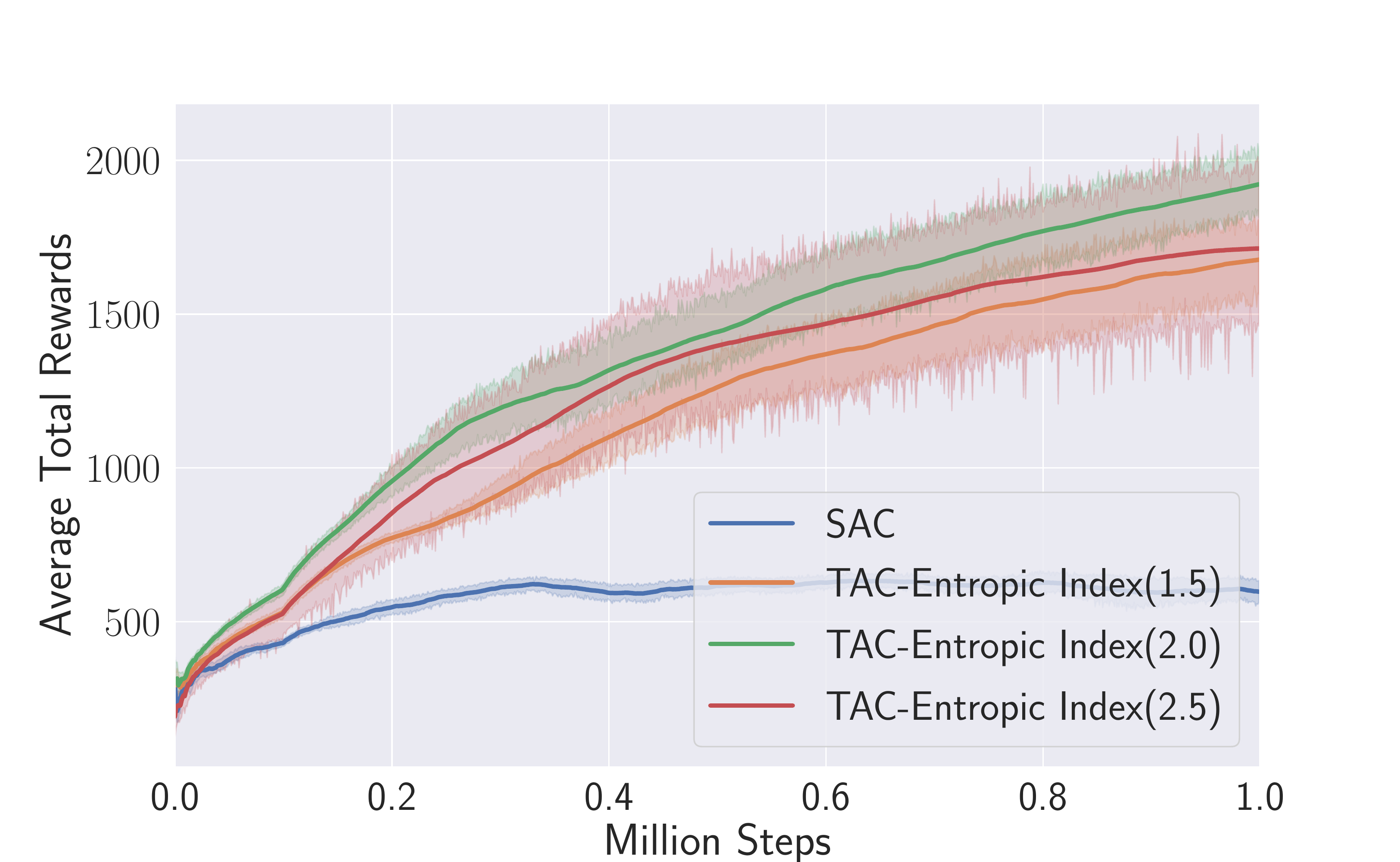}
        \subcaption{\tiny{Ant}}
      \end{minipage}%
      \begin{minipage}[t]{0.33\textwidth}
        \includegraphics[width=\textwidth]{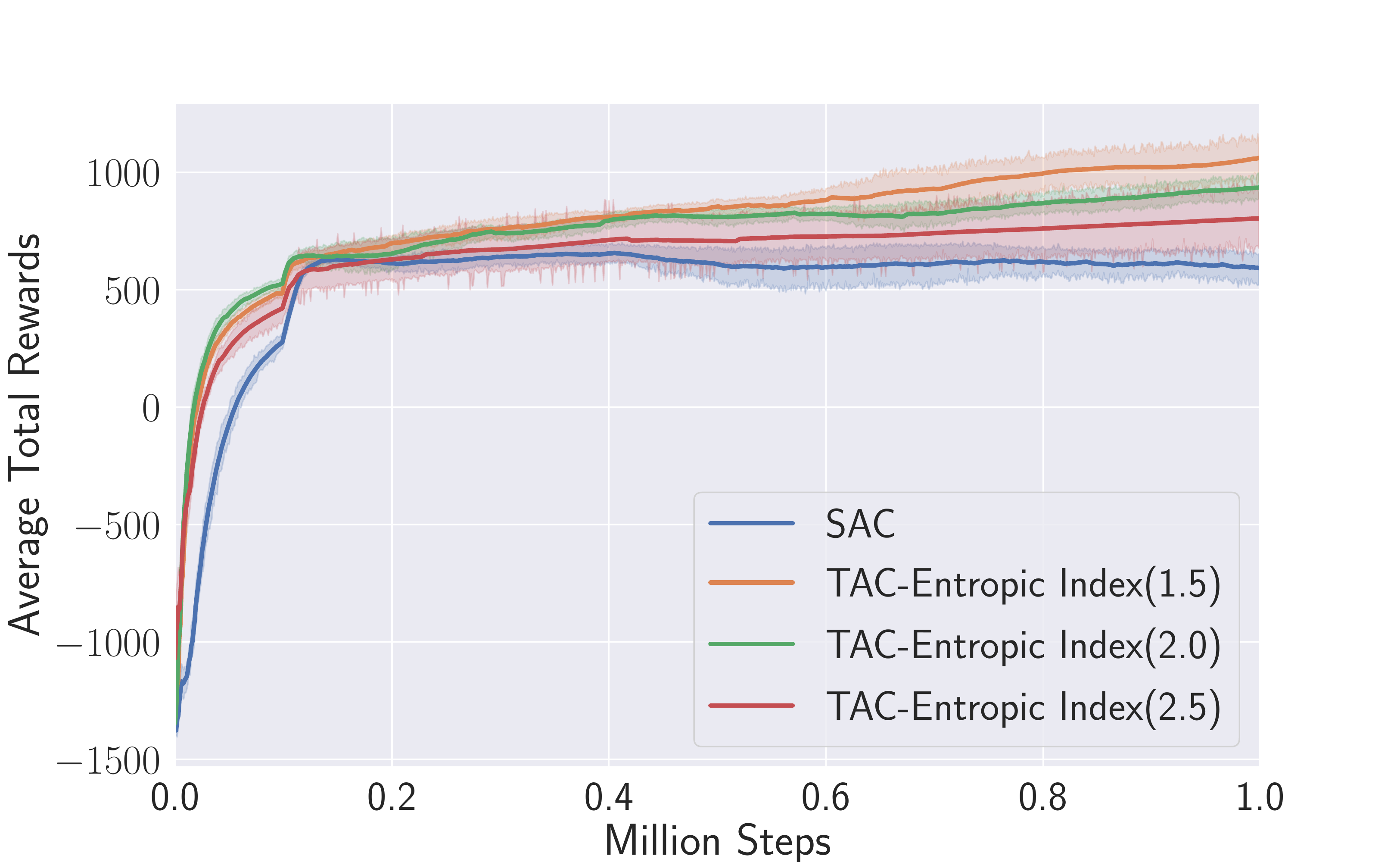}
        \subcaption{\tiny{Half Cheetah}}
      \end{minipage}
      \begin{minipage}[t]{0.33\textwidth}
        \includegraphics[width=\textwidth]{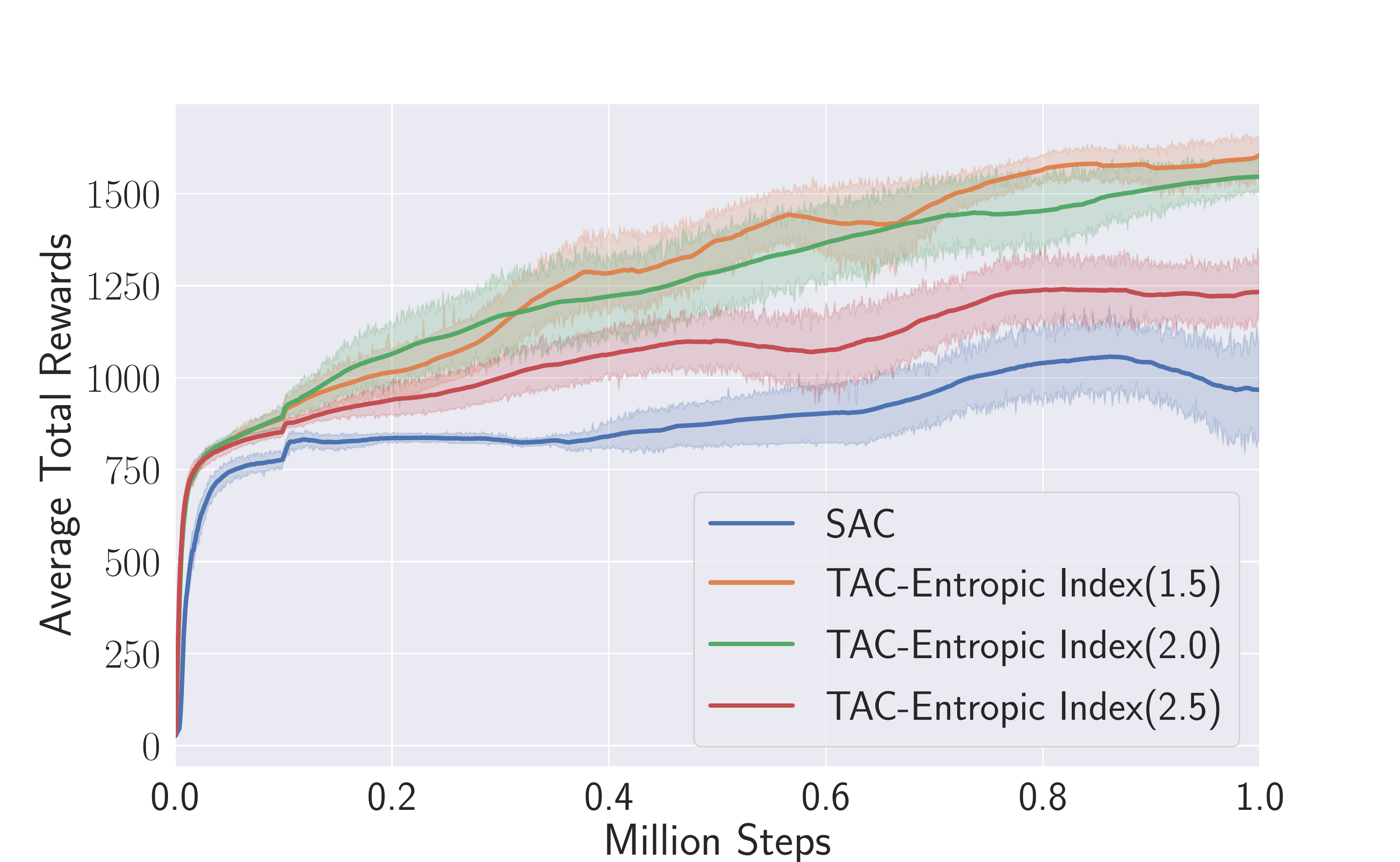}
        \subcaption{\tiny{Hopper}}
      \end{minipage}
      \\
      \begin{minipage}[t]{0.33\textwidth}
        \includegraphics[width=\textwidth]{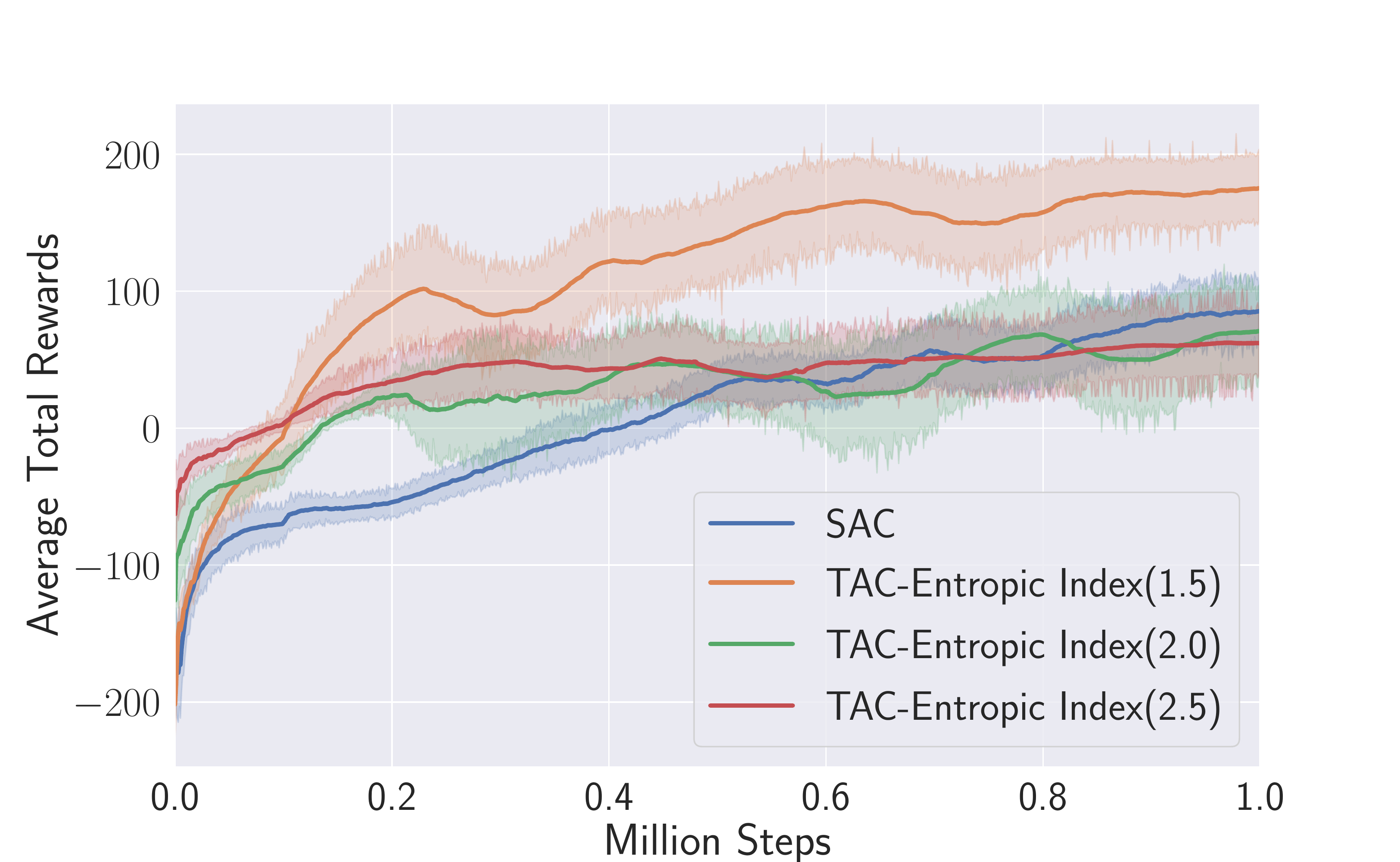}
        \subcaption{\tiny{Lunar Lander}}
      \end{minipage}
      \begin{minipage}[t]{0.33\textwidth}
        \includegraphics[width=\textwidth]{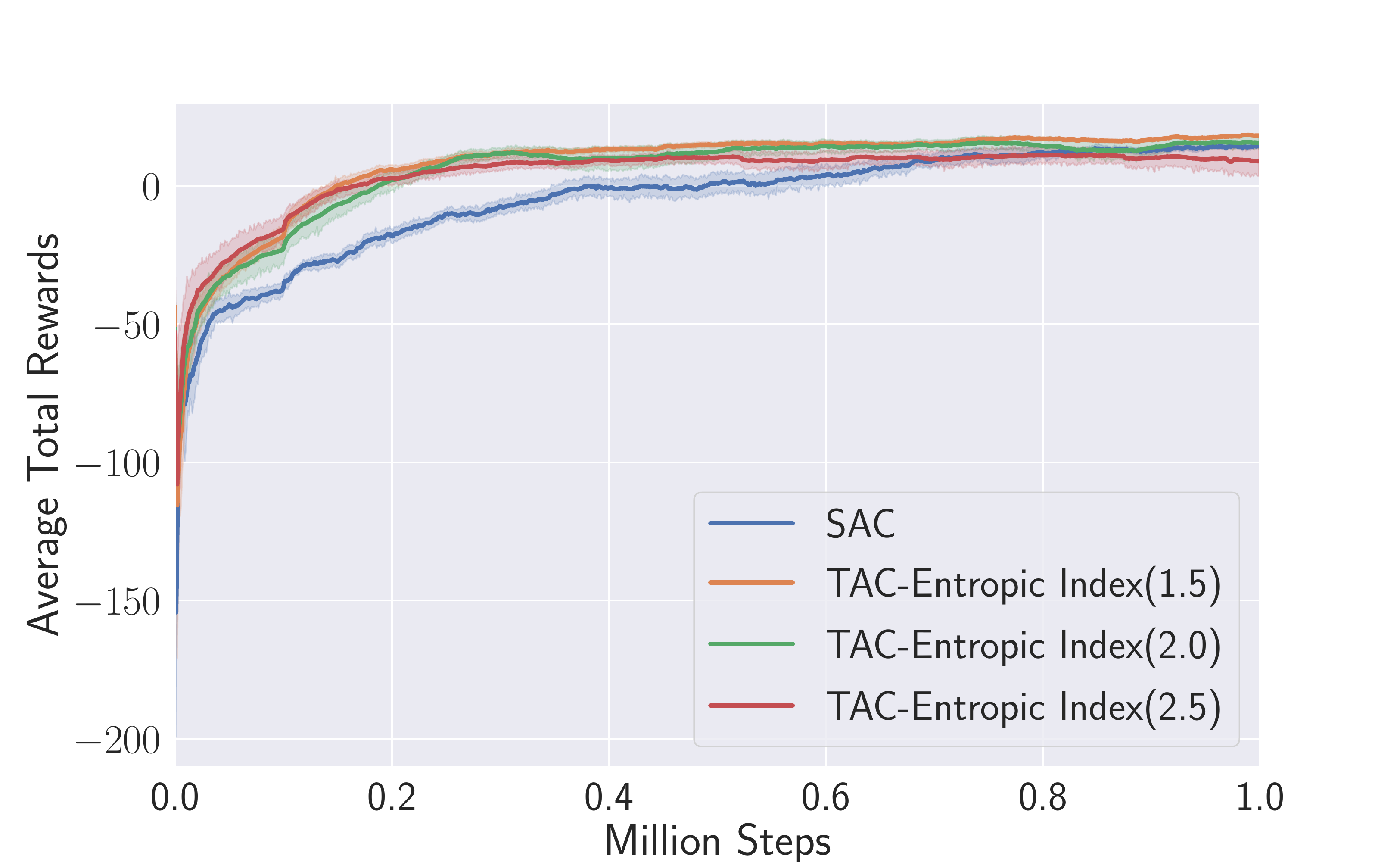}
        \subcaption{\tiny{Reacher}}
      \end{minipage}
      \begin{minipage}[t]{0.33\textwidth}
        \includegraphics[width=\textwidth]{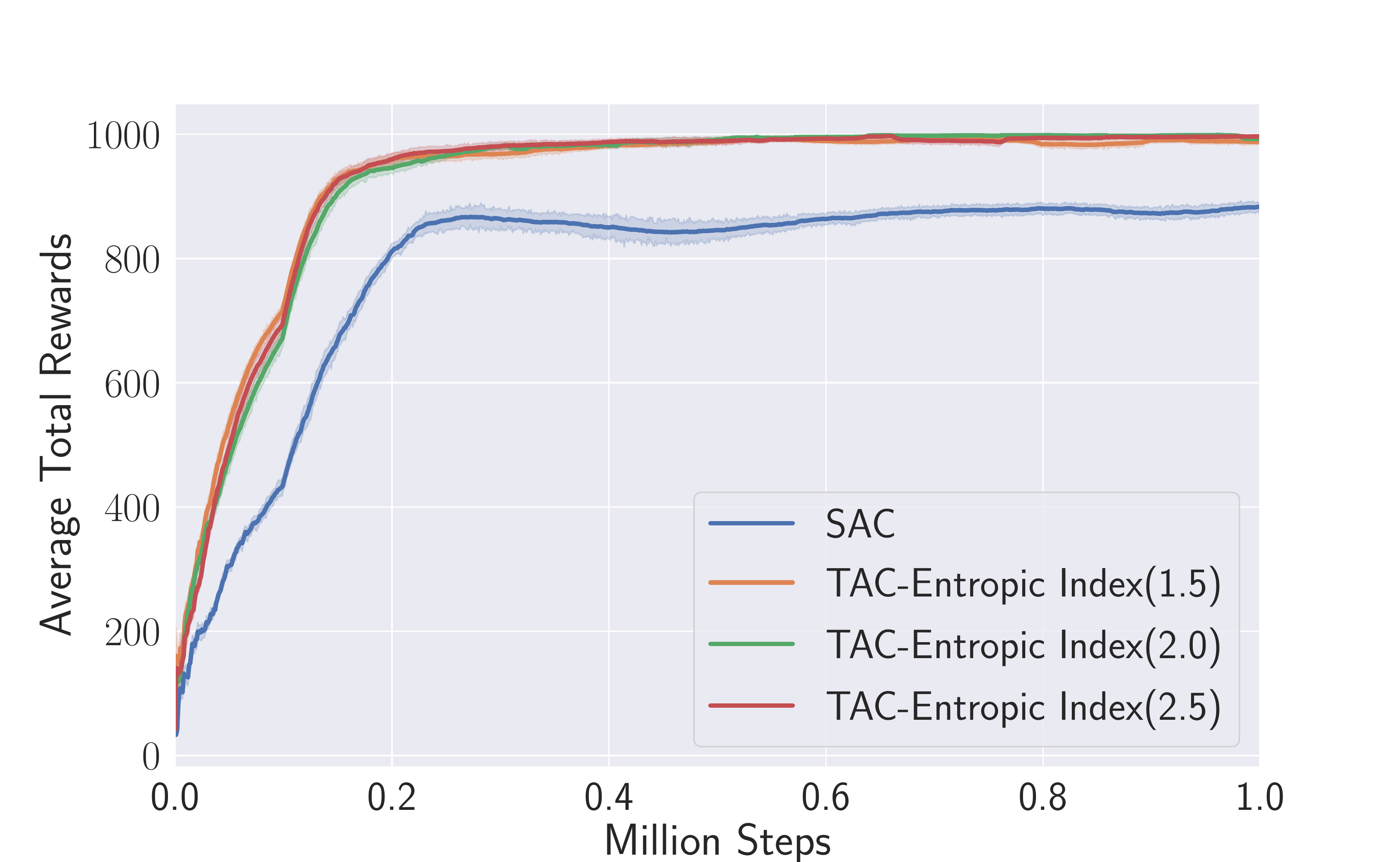}
        \subcaption{\tiny{Walker2D}}
      \end{minipage}
      \caption{The influences of entropic indices ($[1.5, 2.0, 2.5]$) on the performance of TAC on six benchmark control problems, with SAC serving as the baseline.}
      \label{fig:tac_entropic_index}
    \end{figure*}

\label{sub:performance_evaluation}
  \begin{figure*}[!ht]
      \begin{minipage}[t]{0.33\textwidth}
        \includegraphics[width=\textwidth]{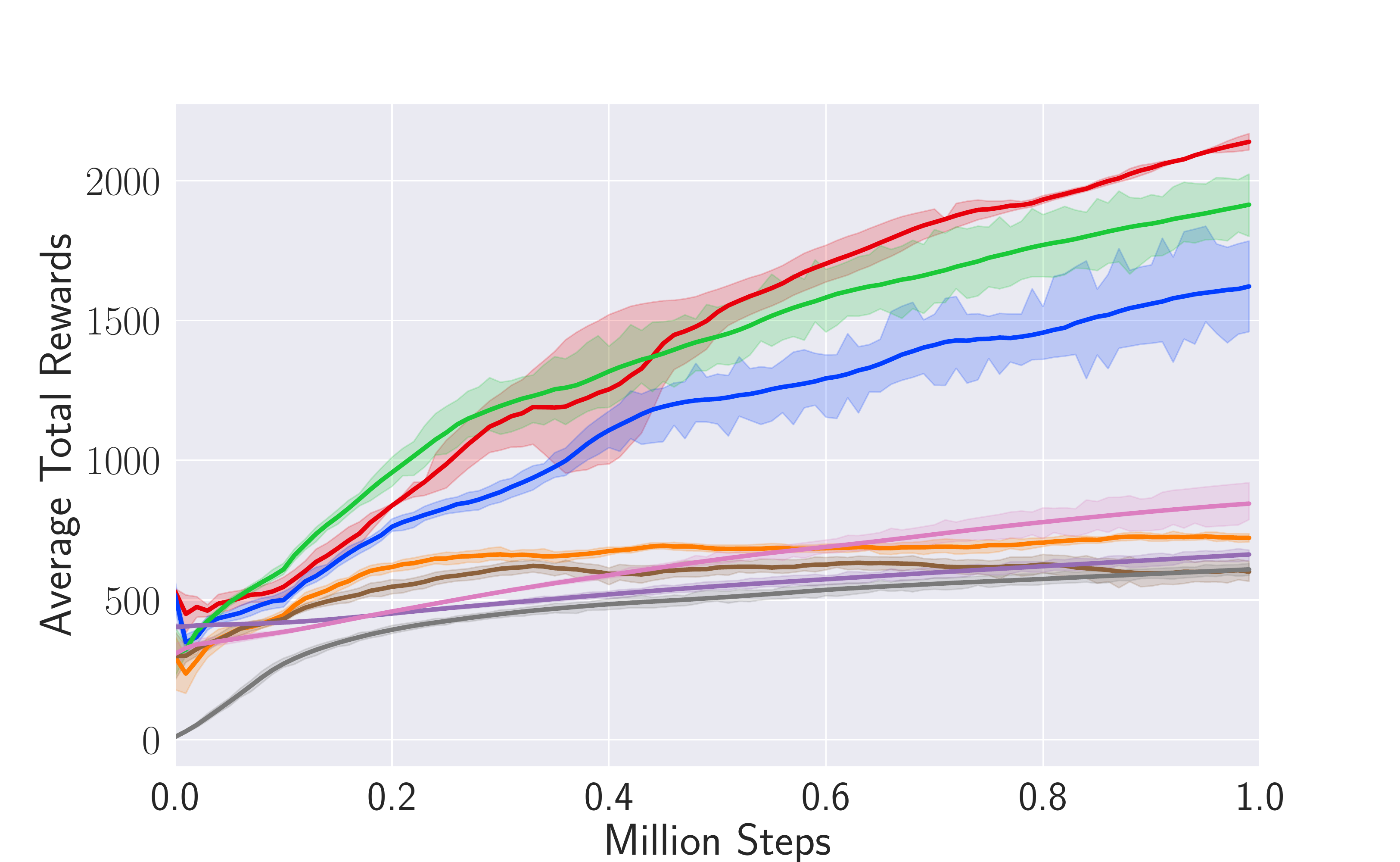}
        \subcaption{\tiny{Ant}}
      \end{minipage}%
      \begin{minipage}[t]{0.33\textwidth}
        \includegraphics[width=\textwidth]{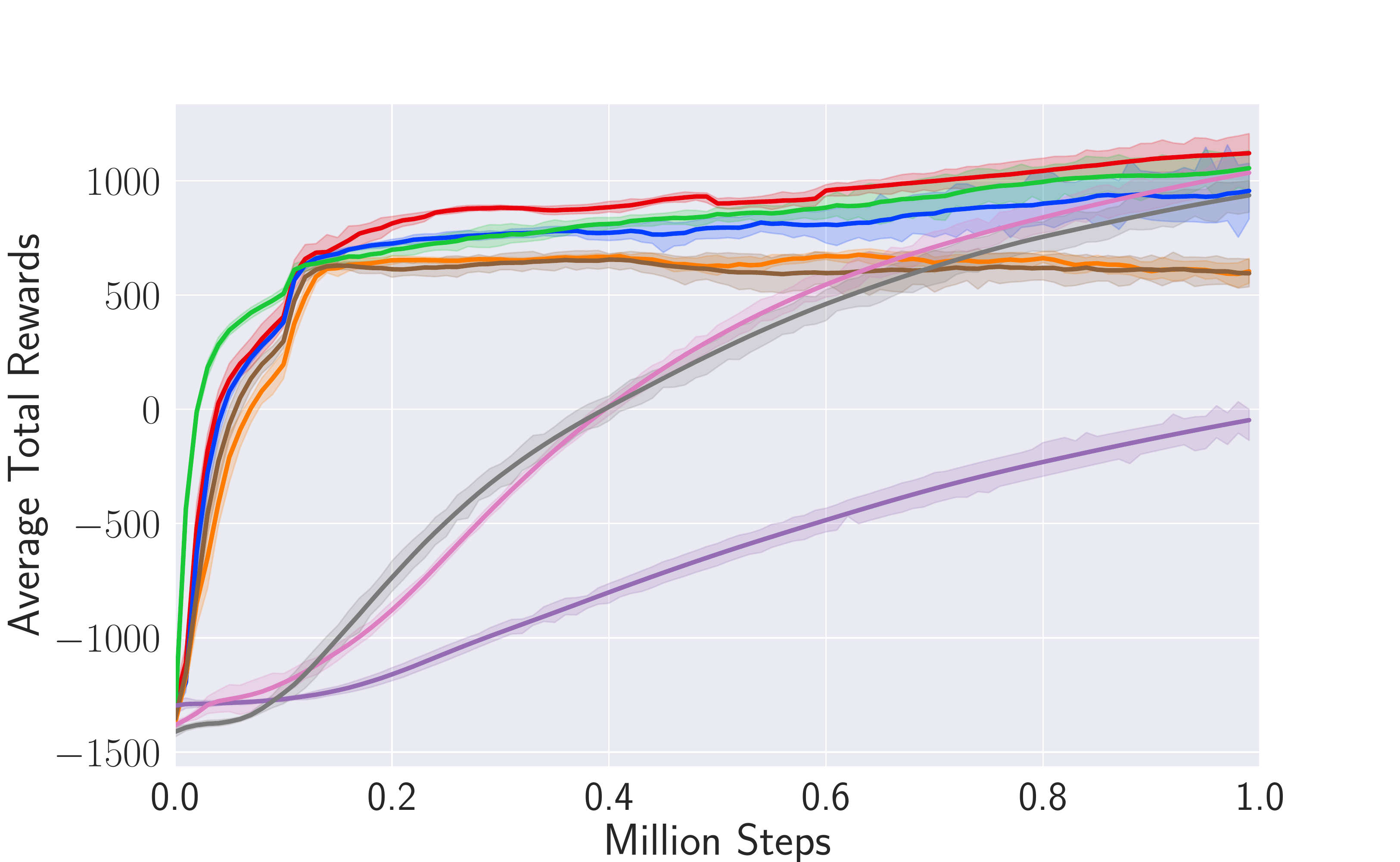}
        \subcaption{\tiny{Half Cheetah}}
      \end{minipage}
      \begin{minipage}[t]{0.33\textwidth}
        \includegraphics[width=\textwidth]{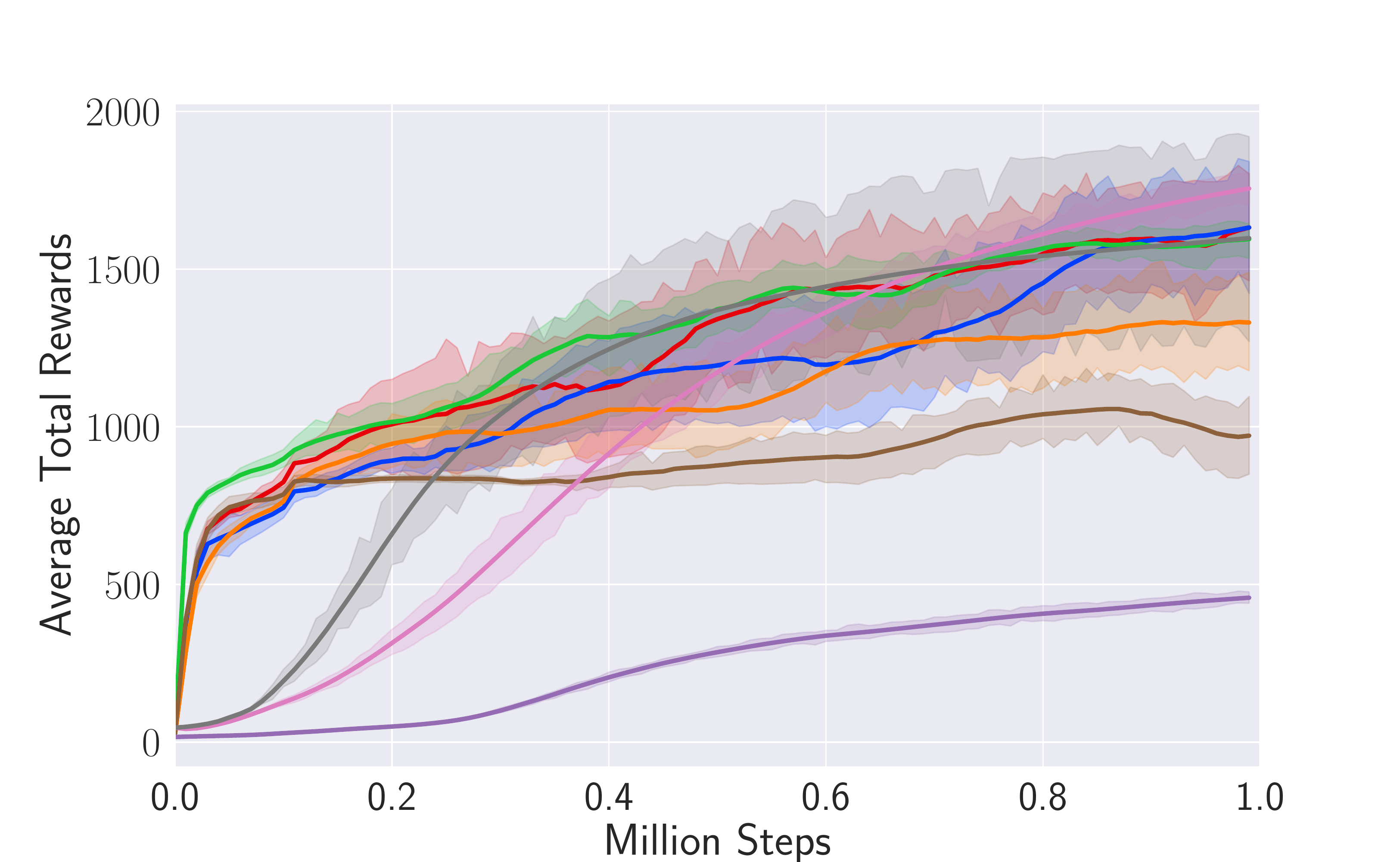}
        \subcaption{\tiny{Hopper}}
      \end{minipage}
      \\
      \begin{minipage}[t]{0.33\textwidth}
        \includegraphics[width=\textwidth]{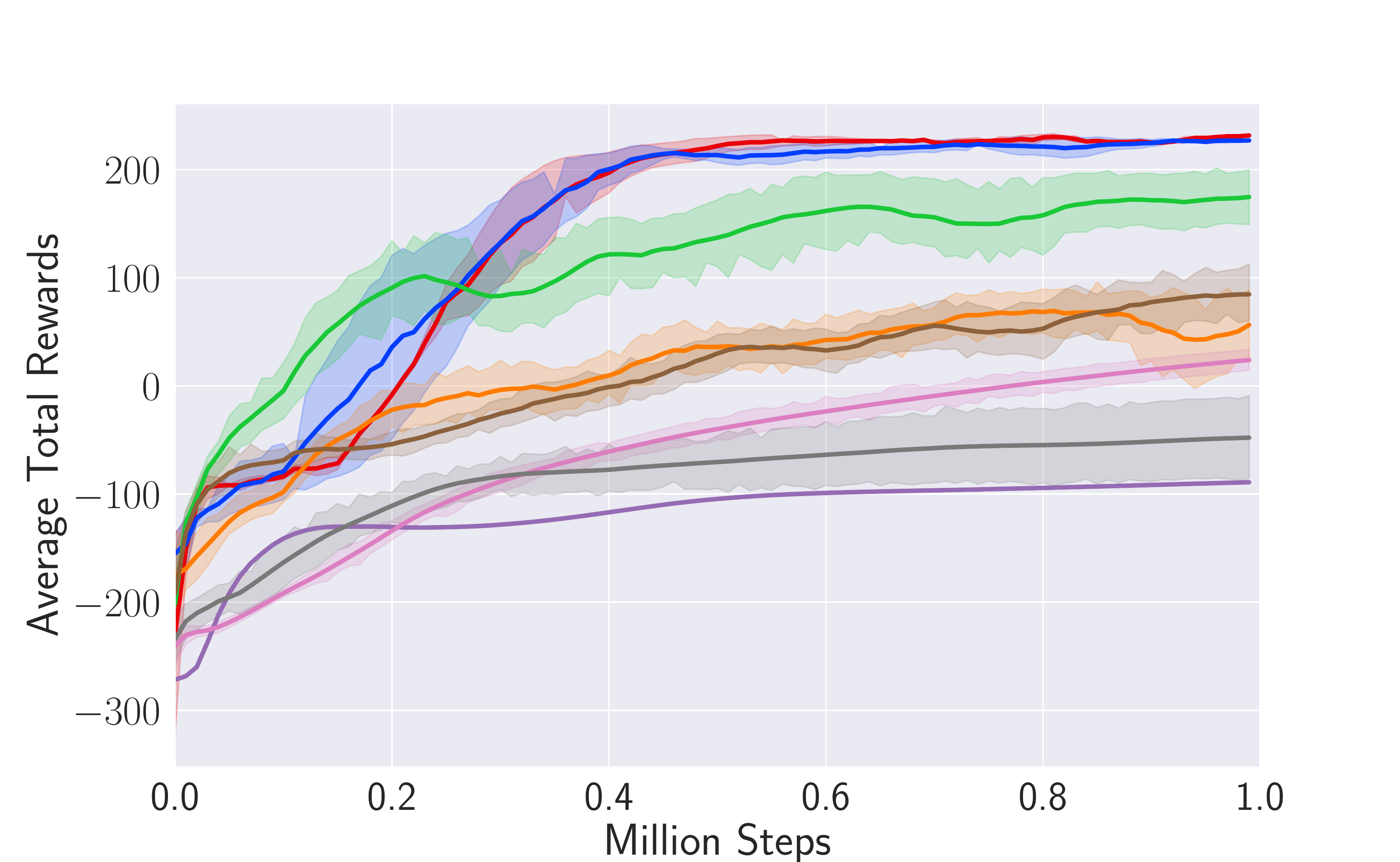}
        \subcaption{\tiny{Lunar Lander}}
      \end{minipage}
      \begin{minipage}[t]{0.33\textwidth}
        \includegraphics[width=\textwidth]{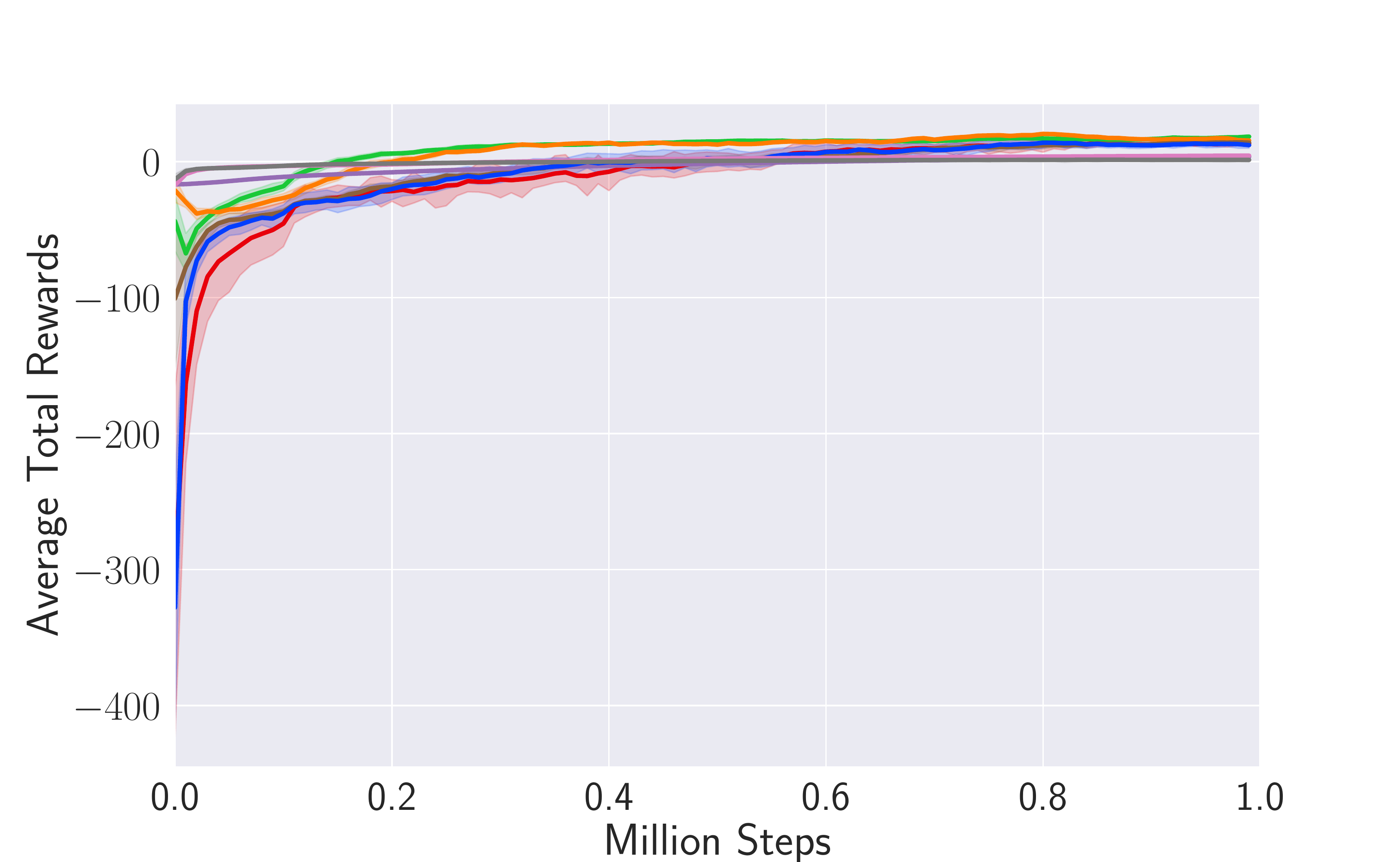}
        \subcaption{\tiny{Reacher}}
      \end{minipage}
      \begin{minipage}[t]{0.33\textwidth}
        \includegraphics[width=\textwidth]{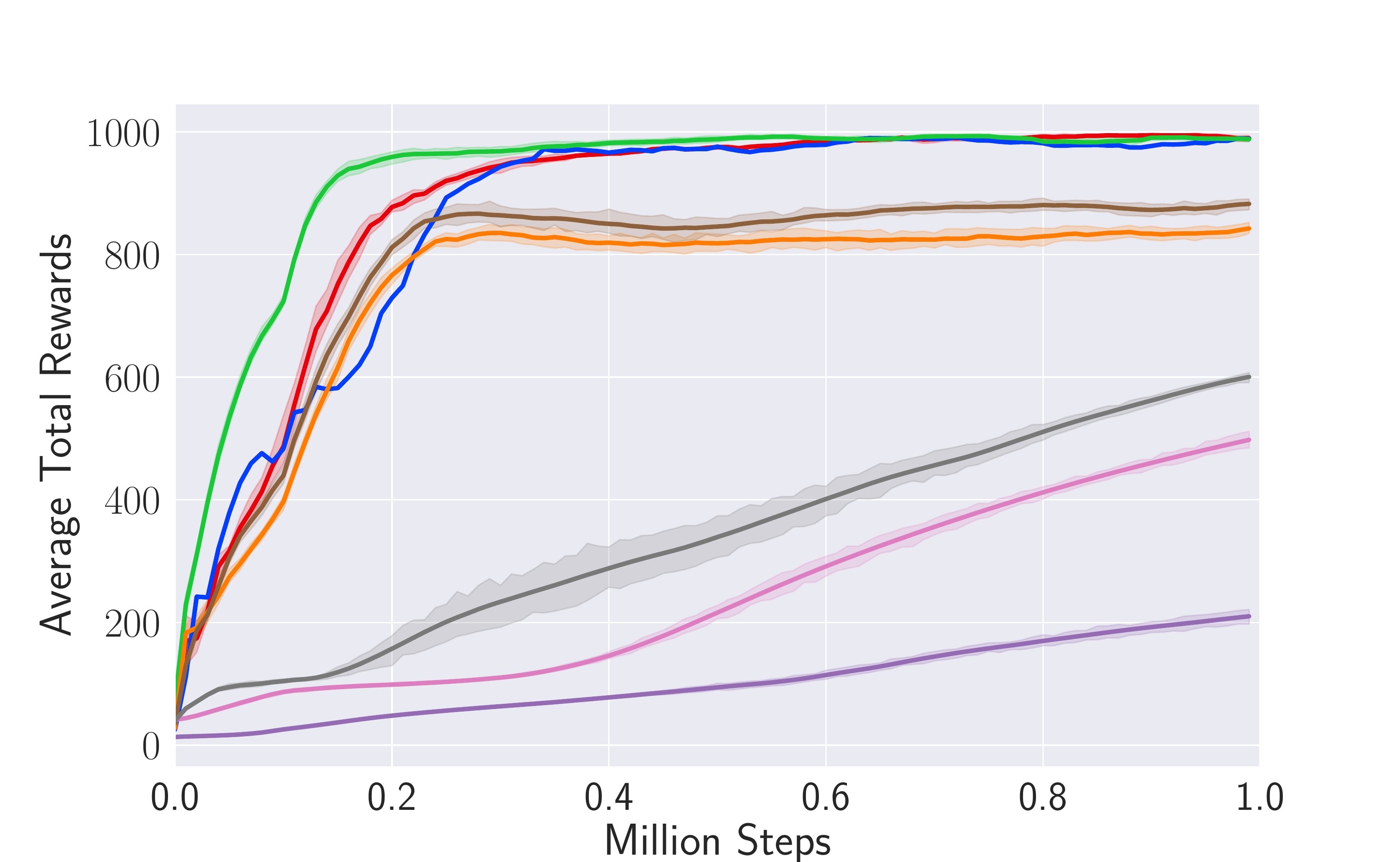}
        \subcaption{\tiny{Walker2D}}
      \end{minipage}
      \\
        \begin{center}
      \begin{minipage}[t]{0.7\textwidth}
        \includegraphics[width=\textwidth]{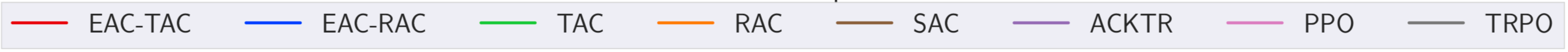}
      \end{minipage}
        \end{center}
      \caption{Performance comparison among all competing algorithms, including SAC, TAC, RAC, EAC-TAC, EAC-RAC, ACKTR, TRPO, and PPO, on six benchmark control problems.}
      \label{fig:performance_evaluation}
    \end{figure*}
 
We next compare the performance of all competing algorithms, as shown in Figure \ref{fig:performance_evaluation}. An inspection of this figure gives rise to two major findings. First, three algorithms proposed by us, i.e., EAC-TAC, EAC-RAC and TAC, have achieved much better performance than other algorithms. In fact, EAC-TAC performed clearly the best on Ant and outperformed all non-ensemble algorithms on Lunar Lander. The three algorithms also achieved the best performance on Walker2D. Moreover, we cannot find any problem on which other algorithms can significantly outperform the three. Second, ensemble techniques can noticeably enhance the reliability and performance of an RL algorithm. For example, EAC-TAC significantly outperformed TAC on two benchmarks  (i.e., Ant and Lunar Lander) and did not perform worse than TAC on the rest. EAC-RAC outperformed RAC on five out of six benchmarks. In general, our experiments show that maximum entropy RL algorithms can be more sample efficient than other competing algorithms. Particularly, during the initial learning phase, these algorithms can learn faster to soon reach a high performance level.

\section{Conclusions}

In this paper, we have established a new policy iteration theory with the aim to maximize arbitrary general entropy measures through Actor-Critic RL. Guided by the theory, we have further developed two new Actor-Critic algorithms, i.e. TAC and RAC, to maximize respectively Tsallis entropy and R\'enyi entropy. Theoretical analysis suggests that these algorithms can be more effective than the recently proposed SAC algorithm. TAC and RAC also inspired us to develop a new ensemble algorithm named EAC that features the use of a bootstrap mechanism for deep environment exploration as well as a new value-function based mechanism for high-level action selection. Empirically we showed that TAC, RAC and EAC can outperform several state-of-the-art learning algorithms. Our research raises an important question regarding the mathematical properties that a general entropy measure must satisfy in order for it to promote effective environment exploration and learning. An answer to this open question will be pursued in the future work.

\small{
\bibliographystyle{named}
\bibliography{citefile}
}

\ \newline
\ \newline
\ \newline
\ \newline
\ \newline
\ \newline
\ \newline
\ \newline
\ \newline
\ \newline
\ \newline
\ \newline
\ \newline
\ \newline
\ \newline

\pagebreak
\vspace{10cm}
\ \ \ 
\vspace{10cm}
\ \ \
\pagebreak

\section*{Appendix A}

Proof of Proposition 1 is presented in this appendix.
\begin{proof}
In order to prove this proposition, it is important to show that the Bellman operator $\mathcal{T}^{\pi}$ for any policy $\pi$, as defined in \eqref{eq-bellman} is a $\gamma$-contraction mapping in the sup-norm. Specifically, at any time $t$, let
\[
\begin{split}
\tilde{r}_{\pi}(s_t,a_t)=&r(s_t,a_t) \\
&+\gamma \alpha \E_{P(s_t,s_{t+1},a_t)} \E_{a_{t+1}\sim\pi(s_{t+1},\cdot)} \mathcal{H}(\pi(s_{t+1},\cdot))
\end{split}
\]
\noindent
then
\[
\begin{split}
\mathcal{T}^{\pi}Q(s_t,a_t)& =\tilde{r}_{\pi}(s_t,a_t) \\
&+\gamma\E_{P(s_t,s_{t+1},a_t)}\E_{a_{t+1}\sim\pi(s_{t+1},\cdot)}Q(s_{t+1},a_{t+1})
\end{split}
\]
Given any two distinct Q-functions $Q$ and $Q'$, based on the definition of $\mathcal{T}^{\pi}$ above, it is straightforward to verify that
\[
\|Q-Q'\|_{\infty} \geq \gamma \|\mathcal{T}^{\pi} Q-\mathcal{T}^{\pi}Q' \|
\]
\noindent
Therefore $\mathcal{T}^{\pi}$ is a $\gamma$-contraction mapping. According to the basic convergence theory for policy evaluation \cite{sutton1998book}, it is immediate to conclude that repeated application of $\mathcal{T}^{\pi}$ on any Q-function $Q^0$ will converge to $Q^{\pi}$ for arbitrary policy $\pi$.
\end{proof}

\section*{Appendix B}

Proof of Proposition 2 is presented in this appendix.
\begin{proof}
Since policy $\pi'$ solves the maximization problem formulated in \eqref{eq-pi-improve-gac} with respect to policy $\pi$, therefore for any state $s\in\mathbb{S}$ and any action $a\in\mathbb{A}$,
\begin{equation}
\begin{split}
\E_{a\sim\pi'(s,\cdot)} \left( Q^{\pi}(s,a)+\alpha \mathcal{H}(\pi')\right) &\geq \E_{a\sim\pi(s,\cdot)} \left(Q^{\pi}(s,a)+\alpha \mathcal{H}(\pi) \right) \\
&= V^{\pi}(s)
\end{split}
\label{eq-bd-propb}
\end{equation}
\noindent
Based on this inequality, at any time $t$, consider the following Bellman equation
\[
\begin{split}
& Q^{\pi}(s_t,a_t)= r(s_t,a_t)+\gamma\E_{P(s_t,s_{t+1},a_t)} V^{\pi}(s_{t+1})\\
& \leq r(s_t,a_t)+\gamma\E_{P(s_t,s_{t+1},a_t)} \E_{a_{t+1}\sim\pi'(s_{t+1},\cdot)}[
Q^{\pi}(s_{t+1},a_{t+1})+ \\ & \ \ \ \ \ \alpha\mathcal{H}(\pi'(s_{t+1},\cdot))
]\\
&=\tilde{r}_{\pi'}+\gamma\E_{P(s_t,s_{t+1},a_t)}\E_{a_{t+1}\sim\pi'(s_{t+1},\cdot)}Q^{\pi}(s_{t+1},a_{t+1}) \\
&=\tilde{r}_{\pi'}+\gamma \E_{P(s_t,s_{t+1},a_t)}\E_{a_{t+1}\sim\pi'(s_{t+1},\cdot)} [ r(s_{t+1},a_{t+1})+\\ &\ \ \ \ \ \gamma\E_{P(s_{t+1},s_{t+2},a_{t+1})} V^{\pi}(s_{t+2}) ] \\
&\leq \tilde{r}_{\pi'}(s_t,a_t) + \gamma \E_{P(s_t,s_{t+1},a_t)}\E_{a_{t+1}\sim\pi'(s_{t+1},\cdot)} \tilde{r}_{\pi'}(s_{t+1},a_{t+1})\\
& \ \ \ \ \ + \gamma^2 \E_{P(s_t,s_{t+1},a_t)}\E_{a_{t+1}\sim\pi'(s_{t+1},\cdot)}\E_{P(s_{t+1},s_{t+2},a_{t+1})}\\
& \ \ \ \ \ \E_{a_{t+2}\sim\pi'(s_{t+2},\cdot)} Q^{\pi}(s_{t+2},a_{t+2}) \\
& \vdots \\
& \leq Q^{\pi'}(s_t,a_t)
\end{split}
\]
\noindent
The inequality above is realized through repeated expanding of $Q^{\pi}$ based on the Bellman equation and \eqref{eq-bd-propb}. The last line of the inequality is derived from Proposition \ref{proposition-1}. It can be concluded now that policy $\pi'$ is an improvement over policy $\pi$. In other words, the policy improvement step governed by \eqref{eq-pi-improve-gac} is effective.
\end{proof}

\section*{Appendix C}

Proof of Proposition 3 is presented in this appendix.
\begin{proof}
It is not difficult to see that repeated application of the policy improvement mechanism defined in \eqref{eq-pi-improve-gac} enables us to build a sequence of policies $\pi_1,\pi_2,\ldots,\pi_i,\ldots$. Moreover, due to Proposition \ref{proposition-2}, the policies created in the sequence are monotonically increasing in performance (in terms of Q-function). For every $\pi_i\in\Pi$ where $i=1,2,\ldots$, $Q^{\pi_i}$ is bounded from above since both the step-wise reward and the entropy of $\pi_i$ are assumed to be bounded. In view of this, the sequence must converge to a specific policy $\pi^*\in\Pi$. Now consider another policy $\pi\in\Pi$ such that $\pi\neq\pi^*$. We can see that, for any state $s\in\mathbb{S}$ and any action $a\in\mathbb{A}$,
\[
\E_{a\sim\pi^*(s,\cdot)}Q^{\pi^*}(s,a)+\alpha\mathcal{H}(\pi^*)\geq \E_{a\sim\pi(s,\cdot)} Q^{\pi^*}(s,a)+\alpha\mathcal{H}(\pi)
\]
\noindent
Hence
\[
V^{\pi^*}(s)\geq \E_{a\sim\pi(s,\cdot)} Q^{\pi^*}(s,a)+\alpha\mathcal{H}(\pi)
\]
\noindent
In line with this inequality, at any time $t$,
\[
\begin{split}
& Q^{\pi^*}(s_t,a_t)=r(s_t,a_t)+\gamma\E_{P(s_t,s_{t+1},a_t)} V^{\pi^*}(s_{t+1}) \\
& \geq r(s_t,a_t)+\gamma\E_{P(s_t,s_{t+1},a_t)} \E_{a_{t+1}\sim\pi(s_{t+1},\cdot)} Q^{\pi^*}(s_{t+1},a_{t+1}) \\
& \ \ \ \ \ +\alpha\mathcal{H}(\pi(s_{t+1},\cdot)) \\
&\vdots \\
& \geq Q^{\pi}(s_t,a_t)
\end{split}
\]
\noindent
Consequently, it is impossible to find another policy $\pi\in\Pi$ that performs better than policy $\pi^*$. Therefore the policy iteration algorithm must converge to an optimal stochastic policy $\pi^*$ where the optimality is defined in terms of the Q-function in \eqref{eq-q-func}.
\end{proof}

\section*{Appendix D}

This appendix presents proof of Corollary \ref{corollary-1}.
\begin{proof}
Consider the procedure of building a new and better policy $\pi'$ from an existing policy $\pi$. By using SAC's policy improvement mechanism in \eqref{eq-pi-improve-sac}, for any state $s\in\mathbb{S}$, $\pi'$ is expected to minimize the KL divergence below.
\[
\begin{split}
& D_{KL}(\pi'(s,\cdot),\exp(Q^{\pi}(s,\cdot)-\mathbb{C}_s)) \\
= &\int_{a\in\mathbb{A}} \left(\pi'(s,a)\log\pi'(s,a)-\pi'(s,a)Q^{\pi}(s,a)\right)\mathrm{d}a+\mathbb{C}_s \\
= & -\E_{a\sim\pi'(s,\cdot)} \left[ Q^{\pi}(s,a)+\alpha\mathcal{H}^s(\pi'(s,\cdot)) \right] + \mathbb{C}_s
\end{split}
\]
\noindent
where $\mathcal{H}^s$ stands for Shannon entropy. Because $\mathbb{C}_s$ remains as a constant in any specific state $s$ (i.e. $\mathbb{C}_s$ is a function of state $s$, but not a function of action $a$), we can ignore $\mathbb{C}_s$ while minimizing the KL divergence in state $s$. In other words, minimizing $D_{KL}$ is equivalent to solving the maximization problem in \eqref{eq-pi-improve-gac} whenever $\mathcal{H}(\pi')=\mathcal{H}^s(\pi')$ and $\alpha=1$. Therefore, for two policies $\pi'$ and $\pi"$ that solve respectively the optimization problems in \eqref{eq-pi-improve-sac} and \eqref{eq-pi-improve-gac}, $Q^{\pi'}(s,a)=Q^{\pi"}(s,a)$ for any $s\in\mathbb{S}$ and any $a\in\mathbb{A}$.
\end{proof}

\section*{Appendix E}

This appendix details the methods we used to evaluate $\int \pi^{\eta}(s,a)\mathrm{d}a$ and $\mathcal{H}^{\eta}(\pi)$ in any state $s\in\mathbb{S}$. As we mentioned in the paper, $\int \pi^{\eta}(s,a)\mathrm{d}a$ can be either computed exactly or approximated through a group of sampled actions. This depends on the type of probability distributions for action selection supported by the corresponding policy family $\Pi$. Specifically, when $\pi(s,\cdot)$ represents an $m$-dimensional multivariate normal distribution with diagonal covariance matrix $\Sigma_{\pi}$, $\int \pi^{\eta}(s,a)\mathrm{d}a$ can be determined easily and precisely as
\[
\int_{a\in\mathbb{A}}\pi^{\eta}(s,a)\mathrm{d}a=\left( \sqrt{2\pi} \right)^{m (1-\eta)}\eta^{-\frac{m}{2}}\prod_{j=1}^m \sigma_j^{1-\eta}
\]
\noindent
where $\sigma_j$ with $j=1,\ldots,m$ refers to the square root of each of the $m$ diagonal elements of matrix $\Sigma_{\pi}$. Subsequently, R\'enyi entropy of policy $\pi$ can be obtained directly as
\[
\mathcal{H}^{\eta}(\pi(s,\cdot))=\frac{m}{2}\log 2\pi-\frac{m}{2(1-\eta)}\log\eta +\sum_{j=1}^m \log\sigma_j
\]

However in our experiments, following the \emph{squashing technique} introduced in \cite{haarnoja2018icml}, the internal action $u$ will be sampled initially from the multivariate normal distribution denoted as $\mu(s,u)$ with mean $\bar{\mu}(s)$ and diagonal covariance matrix $\Sigma_{\mu}(s)$. The sampled internal action will be subsequently passed to an invertible squash function $tanh$ to produce the output action $a\in [-1,1]^m$, i.e. $a=tanh(u)$. In line with this squashing technique,
\[
\begin{split}
& \int_{a\in\mathbb{A}}\pi(s,a)^{\eta}\mathrm{d}a =\\
& \int_{u\in \mathbb{R}^m}\left(
\mu(s,u)\left| \nabla_u tanh(u) \right|^{-1}
\right)^{\eta} \nabla_u tanh(u) \mathrm{d}u
\end{split}
\]
\noindent
Unfortunately, fixed-form solution of the integral above does not exist for arbitrary settings of $\eta$. We therefore decided to approximate it efficiently through a random sampling technique. In particular,
\[
\int_{a\in\mathbb{A}}\pi(s,a)^{\eta}\mathrm{d}a=\E_{u\sim\mu(s,\cdot)}\left( \mu(s,u)^{\eta-1} \left(\nabla_u tanh(u)\right)^{1-\eta} \right)
\]
Accordingly, assuming that $u_1,\ldots,u_k$ are $k$ internal actions sampled independently from $\mu(s,\cdot)$, then
\[
\int_{a\in\mathbb{A}}\pi(s,a)^{\eta}\mathrm{d}a\approx\frac{1}{k}\sum_{i=1}^k \mu(s,u_i)^{\eta-1}\left( \nabla_u tanh(u)\right)^{1-\eta}|_{u=u_i}
\]
Subsequently $\mathcal{H}^{\eta}(\pi)$ can be approximated straightforwardly based on its definition. Apparently, with more sampled actions, the approximation will become more precise. However increasing the number of sampled actions will inevitably prolong the learning time required. In practice we found that using 9 randomly sampled actions can produce reasonably good learning performance without sacrificing noticeably on computation time.

\section*{Appendix F}

In this appendix, we aim to develop and analyze the performance lower bounds of TAC and RAC, in comparison to standard RL algorithms that maximize cumulative rewards alone (without considering the maximum entropy objective). Under the conventional learning framework in \eqref{eq-lt-cum-rew}, the Q-function can be updated through the \emph{standard Bellman operator}, as defined below
\[
\mathcal{T}Q(s,a)=r(s,a)+\gamma\int_{s'\in\mathbb{S}} P(s,s',a)\max_{a'\in\mathbb{A}} Q(s',a') \mathrm{d}s
\]
\noindent
for any state $s\in\mathbb{S}$ and any action $a\in\mathbb{A}$. Different from this approach, in association with the maximum entropy learning framework presented in \eqref{eq-lt-cum-rew-ext}, the Q-function will be updated via the following \emph{maximum entropy Bellman operator}
\begin{equation}
\begin{split}
\mathcal{T}_{\mathcal{H}}Q(s,a)& =r(s,a)+\\
& \gamma\int_{s'\in\mathbb{S}} P(s,s',a) \int_{a\in\mathbb{A}}\pi_{\mathcal{H}}(s',a') Q(s',a') \mathrm{d}a \mathrm{d}s
\end{split}
\label{eq-er-bell-opt}
\end{equation}
\noindent
where $\pi_{\mathcal{H}}\in\Pi$ stands for the stochastic policy obtained by solving the policy improvement problem in \eqref{eq-pi-improve-gac} for both TAC and RAC. In order to analyze $\pi_{\mathcal{H}}$ theoretically, we make several key assumptions as summarized below. Our reliance on these assumptions prevents our analysis from being generally applicable. However, the performance lower bounds derived from our analysis still shed new light on the practical usefulness of TAC and RAC.

\begin{itemize}
\item[\textbf{A1}] The action space of the RL problem is unbounded and 1-dimensional $(-\infty,\infty)$.
\item[\textbf{A2}] The Q-function in any state $s\in\mathbb{S}$ is a bell-shaped non-negative function as defined below
$$
Q(s,a)=\zeta_s \exp\left( -\frac{(a-\bar{a}_s)^2}{2 \xi_s^2} \right)
$$
\noindent
where $\zeta_s,\xi_s > 0$ and both are bounded from above and below across all states. Particularly,
$$
\zeta^* = \max_{s\in\mathbb{S}}\max_{a\in\mathbb{A}} Q(s,a), \xi^*=\max_{s\in\mathbb{S}}\xi_s
$$
\noindent
and
$$
\zeta_* = \min_{s\in\mathbb{S}}\max_{a\in\mathbb{A}} Q(s,a), \xi_*=\min_{s\in\mathbb{S}}\xi_s
$$
\item[\textbf{A3}] An RL agent that follows any policy $\pi\in\Pi$ will select actions in any state $s\in\mathbb{S}$ according to a normal distribution determined by $\pi(s,\cdot)$ with mean $\bar{a}_{\pi,s}$ and standard deviation $\sigma_{\pi,s}\leq \sigma^*$.
\end{itemize}

Assumption A1 is not essential. However its use simplifies our analysis to be presented below. It is possible for us to extend the action space of an RL problem to multiple dimensions but we will not pursue this direction further in this appendix. Assumption A2 can be interpreted in two different ways. Specifically, we can consider A2 as a reflection of the modelling restriction on the Q-function, due to which the estimated Q-function always assume a bell-shaped curve in any state $s$. Similar bell-shaped Q-functions have been utilized in \cite{gu2018icml}. Alternatively, we may assume that the improved policy $\pi_{\mathcal{H}}$ derived from A2 can closely approximate the performance of $\pi_{\mathcal{H}}$ in the case when A2 does not hold consistently. Finally assumption A3 is satisfied by many existing RL algorithms when applied to benchmark control tasks where the actions to be performed in any state during learning are sampled from normal distributions \cite{schulman20171,wu2017nips}.

Following the three assumptions above, we can establish two lemmas below with regard to the cases when Tsallis entropy and R\'enyi entropy are utilized respectively for maximum entropy RL.
\begin{lemma}
Under the assumptions of A1, A2 and A3, when Tsallis entropy $\mathcal{H}^q$ is adopted by the policy improvement mechanism in \eqref{eq-pi-improve-gac}, then
\[
\begin{split}
& \max_{a\in\mathbb{A}}Q(s,a)-\E_{a\sim\pi_{\mathcal{H}^q}} Q(s,a) \leq \zeta^* \max_{\xi\in [\xi_*,\xi^*]} \\
& \left(1- \frac{\xi}{\sqrt{
\xi^2+  \min\left\{ \left(\frac{ \alpha (2\pi)^{ \frac{1-q}{2} } (\sigma^{*2}+\xi^2)^{\frac{3}{2}} }{\sqrt{q}\xi\zeta_*} \right)^{\frac{2}{1+q}},\sigma^{*2} \right\}
}} \right)\\
& =\zeta_{\mathcal{H}_q}
\end{split}
\]
\label{lemma-1}
\end{lemma}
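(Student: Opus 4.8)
The plan is to use A1--A3 to reduce the policy improvement problem \eqref{eq-pi-improve-gac} to a one-dimensional optimization over the standard deviation of a Gaussian, to evaluate the two relevant Gaussian integrals in closed form, to characterize the optimizing standard deviation through its optimality condition, to bound that standard deviation from above, and finally to translate the bound into the stated bound on $\max_{a}Q(s,a)-\E_{a\sim\pi_{\mathcal{H}^q}}Q(s,a)$. Concretely, by A3 the improved policy is $\pi_{\mathcal{H}^q}(s,\cdot)=\mathcal{N}(\mu,\sigma^2)$ with $\sigma\in(0,\sigma^*]$, so in state $s$ solving \eqref{eq-pi-improve-gac} amounts to maximizing $\E_{a\sim\mathcal{N}(\mu,\sigma^2)}[Q(s,a)]+\alpha\mathcal{H}^q(\mathcal{N}(\mu,\sigma^2))$ over $(\mu,\sigma)$. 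Using the bell shape in A2, the Gaussian-against-Gaussian integral gives $\E_{a\sim\mathcal{N}(\mu,\sigma^2)}[Q(s,a)]=\zeta_s\frac{\xi_s}{\sqrt{\sigma^2+\xi_s^2}}\exp\!\big(-\tfrac{(\mu-\bar a_s)^2}{2(\sigma^2+\xi_s^2)}\big)$, while the Tsallis entropy of a Gaussian is $\mathcal{H}^q(\mathcal{N}(\mu,\sigma^2))=\tfrac{1}{q-1}\big(1-(2\pi)^{\frac{1-q}{2}}q^{-1/2}\sigma^{1-q}\big)$, which is independent of $\mu$. Hence the optimal mean equals $\bar a_s$ (it enters only through the exponential factor, which it sets to $1$), and writing $\tilde\sigma_s\in(0,\sigma^*]$ for the optimal standard deviation we get $\max_a Q(s,a)=\zeta_s$ and $\E_{a\sim\pi_{\mathcal{H}^q}}Q(s,a)=\zeta_s\frac{\xi_s}{\sqrt{\tilde\sigma_s^2+\xi_s^2}}$, so the gap equals $\zeta_s\big(1-\frac{\xi_s}{\sqrt{\tilde\sigma_s^2+\xi_s^2}}\big)$.

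Next I would bound $\tilde\sigma_s$. Let $f(\sigma)$ denote the objective at $\mu=\bar a_s$; it is continuous on $(0,\sigma^*]$ and tends to $-\infty$ as $\sigma\to0^+$ since $q>1$, so a maximizer exists. Differentiating gives $f'(\sigma)=-\zeta_s\xi_s\sigma(\sigma^2+\xi_s^2)^{-3/2}+\alpha(2\pi)^{\frac{1-q}{2}}q^{-1/2}\sigma^{-q}$. If $\tilde\sigma_s$ is interior, then $f'(\tilde\sigma_s)=0$, which rearranges to $\tilde\sigma_s^{1+q}=\frac{\alpha(2\pi)^{(1-q)/2}(\tilde\sigma_s^2+\xi_s^2)^{3/2}}{\sqrt q\,\xi_s\zeta_s}$; if instead $\tilde\sigma_s=\sigma^*$, then $f'(\sigma^*)\ge0$, which gives the same relation with ``$\le$''. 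In either case, substituting $\tilde\sigma_s\le\sigma^*$ on the right-hand side and $\zeta_s\ge\zeta_*$ in the denominator yields $\tilde\sigma_s^2\le\big(\frac{\alpha(2\pi)^{(1-q)/2}(\sigma^{*2}+\xi_s^2)^{3/2}}{\sqrt q\,\xi_s\zeta_*}\big)^{2/(1+q)}$, and combining this with the trivial bound $\tilde\sigma_s^2\le\sigma^{*2}$ from A3 gives $\tilde\sigma_s^2\le\min\{\,\cdot\,,\sigma^{*2}\}$, the minimum being exactly the one appearing in the statement of the lemma.

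Finally, since $1-\xi/\sqrt{x+\xi^2}$ is increasing in $x$, the bound on $\tilde\sigma_s^2$ gives $\zeta_s\big(1-\frac{\xi_s}{\sqrt{\tilde\sigma_s^2+\xi_s^2}}\big)\le\zeta_s\big(1-\frac{\xi_s}{\sqrt{\min\{\,\cdot\,,\sigma^{*2}\}+\xi_s^2}}\big)$; bounding $\zeta_s\le\zeta^*$ and then taking the supremum over $\xi_s\in[\xi_*,\xi^*]$ yields exactly $\zeta_{\mathcal{H}_q}$. The two Gaussian integrals and the derivative of $f$ are routine. The main obstacle is that the optimality condition is only implicit in $\tilde\sigma_s$ and cannot be solved in closed form, so the argument must proceed by bounding rather than solving: this requires the interior-versus-boundary case split (and the remark that $f$ attains its maximum on $(0,\sigma^*]$), together with the monotonicity of $1-\xi/\sqrt{x+\xi^2}$ and the uniform bounds $\tilde\sigma_s\le\sigma^*$, $\zeta_*\le\zeta_s\le\zeta^*$, $\xi_*\le\xi_s\le\xi^*$, with the state dependence of $\xi_s$ finally absorbed into the maximum over $[\xi_*,\xi^*]$.
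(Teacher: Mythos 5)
Your proposal is correct and follows essentially the same route as the paper's proof: reduce \eqref{eq-pi-improve-gac} under A1--A3 to a one-dimensional optimization over $\sigma$, evaluate the Gaussian convolution and the Tsallis entropy of a Gaussian in closed form, use the first-order condition $\frac{\alpha(2\pi)^{(1-q)/2}\sigma^{-q}}{\sqrt{q}}=\frac{\xi_s\zeta_s\sigma}{(\xi_s^2+\sigma^2)^{3/2}}$ to bound the optimal $\sigma$ by substituting $\sigma^*$ into the denominator, and then translate this via monotonicity and $\zeta_*\le\zeta_s\le\zeta^*$, $\xi_s\in[\xi_*,\xi^*]$ into $\zeta_{\mathcal{H}_q}$. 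Your explicit interior-versus-boundary case split and the remark that the objective tends to $-\infty$ as $\sigma\to 0^+$ make the step the paper dismisses as ``easily verified'' fully rigorous, but the argument is the same.
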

\begin{lemma}
Under the assumptions of A1, A2 and A3, when R\'enyi entropy $\mathcal{H}^{\eta}$ is adopted by the policy improvement mechanism in \eqref{eq-pi-improve-gac}, then
\[
\begin{split}
& \max_{a\in\mathbb{A}}Q(s,a)-\E_{a\sim\pi_{\mathcal{H}^{\eta}}} Q(s,a) \\ & \leq \zeta^*\max_{\xi\in [\xi_*,\xi^*]}\left(1-\frac{ \xi }{ \sqrt{
\xi^2 + \min\left\{ \frac{ \alpha (\sigma^{*2}+\xi^2)^{\frac{3}{2}} }{ \xi\zeta_* },\sigma^{*2} \right\}
} }
\right)\\
& =\zeta_{\mathcal{H}_{\eta}}
\end{split}
\]
\label{lemma-2}
\end{lemma}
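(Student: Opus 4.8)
The plan is to use Assumptions A1--A3 to reduce the R\'enyi-entropy policy-improvement problem \eqref{eq-pi-improve-gac} to a one-dimensional optimization over the standard deviation of a Gaussian policy, and then to bound that optimizer. First I would observe that under A3 every candidate $\pi'(s,\cdot)$ is a normal density $\mathcal{N}(\mu,\sigma^2)$ with $\sigma\le\sigma^*$, and that for the bell-shaped $Q$ of A2 the Gaussian product integral gives
\[
\E_{a\sim\mathcal{N}(\mu,\sigma^2)}Q(s,a)=\zeta_s\,\frac{\xi_s}{\sqrt{\sigma^2+\xi_s^2}}\,\exp\!\left(-\frac{(\mu-\bar{a}_s)^2}{2(\sigma^2+\xi_s^2)}\right).
\]
Since the R\'enyi entropy of a one-dimensional normal equals $\log\sigma$ plus a term depending only on $\eta$, the objective of \eqref{eq-pi-improve-gac} is maximized at $\mu=\bar{a}_s$ irrespective of $\sigma$, so $\pi_{\mathcal{H}^{\eta}}(s,\cdot)=\mathcal{N}(\bar{a}_s,\sigma_s^2)$ where $\sigma_s$ maximizes $g(\sigma):=\zeta_s\xi_s/\sqrt{\sigma^2+\xi_s^2}+\alpha\log\sigma$ over $(0,\sigma^*]$. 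Consequently $\max_aQ(s,a)-\E_{a\sim\pi_{\mathcal{H}^{\eta}}}Q(s,a)=\zeta_s\left(1-\xi_s/\sqrt{\sigma_s^2+\xi_s^2}\right)$, and the whole lemma reduces to an upper bound on $\sigma_s$.

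Next I would control $\sigma_s$ by a case analysis, which is needed because $g$ is not concave: indeed $g'(\sigma)\approx\alpha/\sigma>0$ for large $\sigma$, so without the constraint $g$ would be unbounded above, and since $g(0^+)=-\infty$ the constrained maximum lies either at an interior critical point or at $\sigma^*$. If it lies at an interior point, the first-order condition $g'(\sigma_s)=0$ rearranges to $\sigma_s^2=\alpha(\sigma_s^2+\xi_s^2)^{3/2}/(\zeta_s\xi_s)$; inserting the crude estimate $\sigma_s\le\sigma^*$ into the cube and $\zeta_s\ge\zeta_*$ yields $\sigma_s^2\le\alpha(\sigma^{*2}+\xi_s^2)^{3/2}/(\zeta_*\xi_s)$, while trivially $\sigma_s^2\le\sigma^{*2}$. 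If instead the maximum occurs at the boundary $\sigma_s=\sigma^*$, then the left-derivative there must be nonnegative, $g'(\sigma^*)\ge0$, which rearranges to $\sigma^{*2}\le\alpha(\sigma^{*2}+\xi_s^2)^{3/2}/(\zeta_s\xi_s)\le\alpha(\sigma^{*2}+\xi_s^2)^{3/2}/(\zeta_*\xi_s)$. In both cases we therefore obtain $\sigma_s^2\le\min\{\alpha(\sigma^{*2}+\xi_s^2)^{3/2}/(\zeta_*\xi_s),\,\sigma^{*2}\}$.

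Finally I would assemble the estimate. Because $\sigma^2\mapsto1-\xi_s/\sqrt{\sigma^2+\xi_s^2}$ is increasing, substituting the bound on $\sigma_s^2$ and then using $\zeta_s\le\zeta^*$ together with $\xi_s\in[\xi_*,\xi^*]$ gives
\[
\max_aQ(s,a)-\E_{a\sim\pi_{\mathcal{H}^{\eta}}}Q(s,a)\le\zeta^*\max_{\xi\in[\xi_*,\xi^*]}\left(1-\frac{\xi}{\sqrt{\xi^2+\min\left\{\frac{\alpha(\sigma^{*2}+\xi^2)^{3/2}}{\xi\zeta_*},\,\sigma^{*2}\right\}}}\right)=\zeta_{\mathcal{H}_{\eta}},
\]
which is the claimed bound. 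The main obstacle I anticipate is precisely the non-concavity of $g$ in the second step: one must check that the stated bound simultaneously covers an interior critical point and the boundary optimum, and must extract a clean closed form from the implicit first-order condition by inserting $\sigma_s\le\sigma^*$ at the right place; the remainder is routine Gaussian calculus and monotonicity. The proof of Lemma~\ref{lemma-1} follows the same template with the Tsallis entropy of a Gaussian in place of the R\'enyi one, which is what introduces the extra $(2\pi)^{(1-q)/2}$, $\sqrt{q}$, and exponent-$2/(1+q)$ factors through the altered first-order condition.
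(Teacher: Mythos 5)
Your proposal is correct and follows essentially the same route as the paper's proof: reduce to a one-dimensional optimization over $\sigma_{\pi,s}$ for a Gaussian policy centered at $\bar{a}_s$, use the first-order condition $\alpha/\sigma=\xi_s\zeta_s\sigma/(\xi_s^2+\sigma^2)^{3/2}$, substitute $\sigma\le\sigma^*$ into the cube to get the closed-form bound, and then relax $\zeta_s,\xi_s$ to their extremes. Your explicit interior-versus-boundary case analysis is in fact a cleaner justification of the inequality $\sigma_s^2\le\min\{\alpha(\sigma^{*2}+\xi_s^2)^{3/2}/(\zeta_*\xi_s),\sigma^{*2}\}$, which the paper merely asserts can be ``easily verified.''
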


\noindent
{\bf Proof of Lemma \ref{lemma-1}}:

\begin{proof}
To prove this lemma, we must determine policy $\pi_{\mathcal{H}^q}$. Due to A2 and A3, we know that $\bar{a}_{\pi,s}=\bar{a}_s$ for $\pi_{\mathcal{H}^q}$. For any policy $\pi$ that satisfies this condition in any $s\in\mathbb{S}$, we have
\[
\begin{split}
& \E_{a\in\pi(s,\cdot)} Q(s,a)+\alpha\mathcal{H}^q(\pi(s,\cdot))=\\
& \frac{\xi_s\zeta_s}{\sqrt{\xi_s^2+\sigma_{\pi,s}^2}}+\alpha
\frac{ (2\pi)^{\frac{1-q}{2}} \sigma_{\pi,s}^{1-q} -\sqrt{q} }{\sqrt{q}(1-q)}
\end{split}
\]
\noindent
In order to maximize the expectation above through adjusting $\sigma_{\pi,s}$, we take the derivative of the expectation with respect to $\sigma_{\pi,s}$ and obtain the equation below.
\begin{equation}
\frac{ \alpha (2\pi)^{\frac{1-q}{2}} \sigma_{\pi,s}^{-q} }{ \sqrt{q} } - \frac{\xi_s\zeta_s \sigma_{\pi,s}}{ (\xi_s^2+\sigma_{\pi,s}^2)^{\frac{3}{2}} }=0
\label{eq-max-sigma-tsallis}
\end{equation}
As a result of solving this equation, we can obtain $\sigma_{\pi_{\mathcal{H}^q}}$ for policy $\pi_{\mathcal{H}^q}$. Unfortunately analytic solution of this equation does not exist for arbitrary settings of $q>1$, $\xi>0$, $\zeta_s>0$ and $\alpha>0$. Nevertheless, let $\bar{\sigma}$ be the solution of the equation below
\[
\frac{ \alpha (2\pi)^{\frac{1-q}{2}} \sigma_{\pi,s}^{-q} }{ \sqrt{q} } - \frac{\xi_s\zeta_s \sigma_{\pi,s}}{ (\xi_s^2+\sigma^{*2})^{\frac{3}{2}} }=0
\]
It can be easily verified that $\sigma_{\pi_{\mathcal{H}^q}}\leq \bar{\sigma}$ if the solution of \eqref{eq-max-sigma-tsallis} is less than or equal to $\sigma^*$. On the other hand, if the solution of \eqref{eq-max-sigma-tsallis} is greater than $\sigma^*$, then $\sigma_{\pi_{\mathcal{H}^q}}=\sigma^*$. In line with this understanding, the inequality below can be derived.
\[
\sigma_{\pi_{\mathcal{H}^q}}\leq \min\left\{ \left(
\frac{ \alpha (2\pi)^{\frac{1-q}{2}} (\sigma^{*2}+\xi_s^2)^{\frac{3}{2}} }{ \sqrt{q}\xi_s \zeta_s }
\right)^{\frac{1}{1+q}},\sigma^* \right\}
\]
\noindent
From the above, we can further derive the inequality below.
\[
\begin{split}
&\E_{a\sim\pi_{\mathcal{H}^q}} Q(s,a)\geq \\
& \frac{\xi_s\zeta_s}{\sqrt{
\xi_s^2+ \min\left\{ \left(\frac{ \alpha (2\pi)^{ \frac{1-q}{2} } (\sigma^{*2}+\xi_s^2)^{\frac{3}{2}} }{\sqrt{q}\xi_s\zeta_s} \right)^{\frac{2}{1+q}},\sigma^{*2} \right\}
}}
\end{split}
\]
\noindent
Consequently,
\[
\begin{split}
& \max_{a\in\mathbb{A}}Q(s,a)-\E_{a\sim\pi_{\mathcal{H}^q}} Q(s,a) \leq \\
& \zeta_s \left(1- \frac{\xi_s}{\sqrt{
\xi_s^2+ \min\left\{ \left(\frac{ \alpha (2\pi)^{ \frac{1-q}{2} } (\sigma^{*2}+\xi_s^2)^{\frac{3}{2}} }{\sqrt{q}\xi_s\zeta_s} \right)^{\frac{2}{1+q}},\sigma^{*2} \right\}
}} \right) \leq \\
& \zeta^* \left(1- \frac{\xi_s}{\sqrt{
\xi_s^2+ \min\left\{ \left(\frac{ \alpha (2\pi)^{ \frac{1-q}{2} } (\sigma^{*2}+\xi_s^2)^{\frac{3}{2}} }{\sqrt{q}\xi_s\zeta_*} \right)^{\frac{2}{1+q}},\sigma^{*2}\right\}
}} \right)
\end{split}
\]
\noindent
Lemma \ref{lemma-1} can be finally obtained as a result.
\end{proof}

\noindent
{\bf Proof of Lemma \ref{lemma-2}}:

\begin{proof}
The proof of Lemma \ref{lemma-2} is similar to the proof of Lemma \ref{lemma-1}.  Specifically, in any $s\in\mathbb{S}$, we have
\[
\begin{split}
& \E_{a\in\pi'(s,\cdot)} Q(s,a)+\alpha\mathcal{H}^{\eta}(\pi'(s,\cdot))=\\
& \frac{\xi_s\zeta_s}{\sqrt{\xi_s^2+\sigma_{\pi,s}^2}}+\frac{\alpha}{1-\eta}\log \left(
\frac{(2\pi)^{\frac{1-\eta}{2}} \sigma_{\pi,s}^{1-\eta} }{ \sqrt{\eta} }
\right)
\end{split}
\]
\noindent
Hence, to determine $\sigma_{\pi_{\mathcal{H}^q}}$ for policy $\pi_{\mathcal{H}^q}$, we can solve the equation below.
\[
\frac{\alpha}{\sigma_{\pi,s}}-\frac{ \xi_s \sigma_{\pi,s} \zeta_s }{ (\xi_s^2+\sigma_{\pi,s}^2)^{\frac{3}{2} } }=0
\]
Although this equation can be solved directly, we choose to instead identify the upper bound of $\sigma_{\pi_{\mathcal{H}^q}}$ by solving the following equation (this is because the solution of the equation below is much easier to analyze than the solution of the equation above)
\[
\frac{\alpha}{\sigma_{\pi,s}}-\frac{ \xi_s \sigma_{\pi,s} \zeta_s }{ (\xi_s^2+\sigma^*)^{\frac{3}{2} } }=0
\]
Similar to our proof of Lemma \ref{lemma-1}, this gives rise to the inequality
\[
\sigma_{\pi_{\mathcal{H}^q}}\leq \min\left\{ \frac{ \sqrt{ \alpha (\sigma^{*2} +\xi_s^2)^{\frac{3}{2}} } }{ \sqrt{\xi_s\zeta_s} },\sigma^* \right\}
\]
Subsequently,
\[
\E_{a\sim\pi_{\mathcal{H}^{\eta}}} Q(s,a)\geq \frac{ \xi_s\zeta_s }{ \sqrt{ \xi_s^2+\min\left\{\frac{ \alpha (\sigma^{*2}+\xi_s^2)^{\frac{3}{2}} }{ \xi_s\zeta_s } ,\sigma^{*2} \right\}} }
\]
\noindent
Therefore,
\[
\begin{split}
& \max_{a\in\mathbb{A}}Q(s,a)-\E_{a\sim\pi_{\mathcal{H}^{\eta}}} Q(s,a) \leq \\
& \zeta_s \left(1- \frac{ \xi_s }{ \sqrt{ \xi_s^2+ \min\left\{ \frac{ \alpha (\sigma^{*2}+\xi_s^2)^{\frac{3}{2}} }{ \xi_s\zeta_s },\sigma^{*2} \right\}  } } \right) \leq \\
&\zeta^* \left(1- \frac{ \xi_s}{ \sqrt{ \xi_s^2+ \min\left\{  \frac{ \alpha (\sigma^{*2}+\xi_s^2)^{\frac{3}{2}} }{ \xi_s\zeta_* },\sigma^{*2} \right\}  } } \right)
\end{split}
\]
Lemma \ref{lemma-2} can now be obtained directly.
\end{proof}

Based on Lemma \ref{lemma-1} and Lemma \ref{lemma-2}, we can proceed to bound the performance difference between standard RL and maximum entropy RL. This is formally presented in Proposition \ref{proposition-app}.
\begin{proposition}
Under assumptions A1, A2 and A3, in comparison to the performance of RL driven by the standard Bellman operator, the performance of RL driven by the maximum entropy Bellman operator is bounded from below by either
\[
\lim_{k\rightarrow\infty}\mathcal{T}_{\mathcal{H}^q}^k Q(s,a)\geq \lim_{k\rightarrow\infty} \mathcal{T}^k Q(s,a)-\frac{\gamma}{1-\gamma} \zeta_{\mathcal{H}_q}
\]
\noindent
or
\[
\lim_{k\rightarrow\infty}\mathcal{T}_{\mathcal{H}^{\eta}}^k Q(s,a)\geq \lim_{k\rightarrow\infty} \mathcal{T}^k Q(s,a)-\frac{\gamma}{1-\gamma} \zeta_{\mathcal{H}_{\eta}}
\]
\noindent
for any state $s\in\mathbb{S}$ and any action $a\in\mathbb{A}$, subject to the entropy measure to be maximized during RL.
\label{proposition-app}
\end{proposition}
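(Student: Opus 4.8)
The plan is to read Lemmas~\ref{lemma-1} and \ref{lemma-2} as bounds on the per-step error incurred when the greedy backup $\max_{a'}Q(s',a')$ inside the standard Bellman operator $\mathcal{T}$ is replaced by the stochastic backup $\E_{a'\sim\pi_{\mathcal{H}}(s',\cdot)}Q(s',a')$ inside $\mathcal{T}_{\mathcal{H}}$, and then to propagate that error through the iteration by a geometric-series argument.

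First I would fix an arbitrary starting Q-function $Q$ and, in line with the modelling-restriction reading of A2, assume that every iterate produced by $\mathcal{T}$ and by $\mathcal{T}_{\mathcal{H}^q}$ remains bell-shaped so that Lemma~\ref{lemma-1} applies to all of them. Put $Q_k=\mathcal{T}^k Q$ and $\hat{Q}_k=\mathcal{T}_{\mathcal{H}^q}^k Q$, both started from $Q_0=\hat{Q}_0=Q$. The sequence $Q_k$ converges because $\mathcal{T}$ is a $\gamma$-contraction in the sup-norm (standard value iteration), and $\hat{Q}_k$ converges by the hypothesis implicit in the statement. I would then prove by induction on $k$ that, for all $s\in\mathbb{S}$ and $a\in\mathbb{A}$,
\[
\hat{Q}_k(s,a)\;\geq\;Q_k(s,a)-c_k,\qquad c_k:=\zeta_{\mathcal{H}_q}\sum_{i=1}^{k}\gamma^{i},
\]
with $c_0=0$. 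The base case is immediate since $\hat{Q}_0=Q_0$.

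For the inductive step, let $\pi$ be the policy obtained by solving the policy-improvement problem \eqref{eq-pi-improve-gac} for $\hat{Q}_k$, so that by the definition of $\mathcal{T}_{\mathcal{H}^q}$ in \eqref{eq-er-bell-opt} we have $\hat{Q}_{k+1}(s,a)=r(s,a)+\gamma\int_{s'} P(s,s',a)\,\E_{a'\sim\pi(s',\cdot)}\hat{Q}_k(s',a')\,\mathrm{d}s'$. Lemma~\ref{lemma-1} gives $\E_{a'\sim\pi(s',\cdot)}\hat{Q}_k(s',a')\geq\max_{a'}\hat{Q}_k(s',a')-\zeta_{\mathcal{H}_q}$, and taking the maximum over $a'$ in the induction hypothesis $\hat{Q}_k\geq Q_k-c_k$ gives $\max_{a'}\hat{Q}_k(s',a')\geq\max_{a'}Q_k(s',a')-c_k$. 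Substituting both and using that $P(s,\cdot,a)$ integrates to one yields
\[
\hat{Q}_{k+1}(s,a)\;\geq\;r(s,a)+\gamma\!\int_{s'} P(s,s',a)\max_{a'}Q_k(s',a')\,\mathrm{d}s'-\gamma\bigl(c_k+\zeta_{\mathcal{H}_q}\bigr),
\]
and since the displayed integral equals $\mathcal{T}Q_k(s,a)=Q_{k+1}(s,a)$ and $\gamma(c_k+\zeta_{\mathcal{H}_q})=c_{k+1}$, the induction closes. Letting $k\to\infty$, $c_k$ increases to $\frac{\gamma}{1-\gamma}\zeta_{\mathcal{H}_q}$, and the desired bound $\lim_{k}\mathcal{T}_{\mathcal{H}^q}^k Q(s,a)\geq\lim_{k}\mathcal{T}^k Q(s,a)-\frac{\gamma}{1-\gamma}\zeta_{\mathcal{H}_q}$ follows for every $s$ and $a$.

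The R\'enyi case is obtained by the identical argument with Lemma~\ref{lemma-2} and $\zeta_{\mathcal{H}_{\eta}}$ in place of Lemma~\ref{lemma-1} and $\zeta_{\mathcal{H}_q}$. The part that needs care is not the algebra but the two consistency points: that the bell-shape hypothesis of the lemmas can be maintained along the whole trajectory of iterates (exactly the caveat already flagged before the lemmas), and that $\mathcal{T}_{\mathcal{H}^q}$ evaluates the policy improvement of $\hat{Q}_k$ rather than of $Q_k$ — which is harmless here only because the induction hypothesis is a \emph{uniform} lower bound and therefore survives the $\max_{a'}$ operation. One should also record that all value functions involved are bounded (bounded rewards, $\gamma<1$, and A2), so the limits and the geometric series are well defined; the single substantive observation making the whole scheme work is simply that the per-step slack $\zeta_{\mathcal{H}_q}$ accumulates under the discount $\gamma<1$ and hence stays summable.
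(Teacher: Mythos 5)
Your proposal is correct and follows essentially the same route as the paper: an induction showing $\mathcal{T}^k Q - \mathcal{T}_{\mathcal{H}}^k Q \leq \sum_{j=1}^{k}\gamma^{j}\zeta$, with Lemmas~\ref{lemma-1}/\ref{lemma-2} supplying the per-step gap between the greedy backup and the entropy-policy expectation, and the discount factor turning the accumulated slack into the geometric sum $\frac{\gamma}{1-\gamma}\zeta$. The only cosmetic difference is that you apply the lemma directly to the iterate $\hat{Q}_k$ and push the constant through $\max_{a'}$, whereas the paper uses monotonicity and the shift property of $\mathcal{T}$; your explicit flagging of the need for A2 to hold along all iterates is a caveat the paper leaves implicit.
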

\begin{proof}
To prove Proposition \ref{proposition-app}, we must show that
\begin{equation}
\mathcal{T}^k Q(s,a)-\mathcal{T}_{\mathcal{H}}^k Q(s,a)\leq \sum_{j=1}^k \gamma^j \zeta
\label{eq-t-q-diff}
\end{equation}
\noindent
Here $\mathcal{H}$ can be either $\mathcal{H}^q$ or $\mathcal{H}^{\eta}$and $\zeta$ can be either $\zeta_{\mathcal{H}_q}$ or $\zeta_{\mathcal{H}_{\eta}}$. Consider first of all the case when $k=1$. In this case, based on the definition of $\mathcal{T}$ and $\mathcal{T}_{\mathcal{H}}$, we can see that
\[
\begin{split}
& \mathcal{T} Q(s,a)-\mathcal{T}_{\mathcal{H}} Q(s,a) \\
& = \gamma\int_{s'\in\mathbb{S}} P(s,s',a) \left( \max_{a'\in\mathbb{A}} Q(s',a')- \E_{a\sim\pi_{\mathcal{H}}} Q(s,a) \right) \\
& \leq \gamma \zeta
\end{split}
\]
Now assume that \eqref{eq-t-q-diff} holds for $k=1,\ldots,l$. Consider further the situation when $ k=l+1$, i.e.
\[
\begin{split}
& \mathcal{T}^{l+1} Q(s,a)-\mathcal{T}_{\mathcal{H}}^{l+1} Q(s,a) \\
& = \mathcal{T}\mathcal{T}^l Q(s,a) - \mathcal{T}_{\mathcal{H}}^{l+1} Q(s,a) \\
& \leq \mathcal{T} \left( \mathcal{T}_{\mathcal{H}}^l Q(s,a)+\sum_{j=1}^l \gamma^j\zeta \right) - \mathcal{T}_{\mathcal{H}}^{l+1} Q(s,a) \\
& \leq \sum_{j=1}^l \gamma^{j+1}\zeta + \gamma\zeta \\
& = \sum_{j=1}^{l+1} \gamma^j \zeta
\end{split}
\]
\noindent
By mathematical induction, \eqref{eq-t-q-diff} has been verified successfully. Proposition \ref{proposition-app} can now be obtained directly by taking the limit of \eqref{eq-t-q-diff} as $k$ approaches to $\infty$.
\end{proof}

Based on the performance lower bounds developed in Proposition \ref{proposition-app}, the effectiveness of TAC and RAC can be studied further. Specifically, we can consider $\lim_{k\rightarrow\infty}\mathcal{T}^k Q$ as representing the best possible performance achievable through RL, since $\lim_{k\rightarrow\infty} \mathcal{T}^k Q$ converges to $Q^*$ (i.e. the Q-function of the optimal policy $\pi^*$) on RL problems with discrete state space and discrete action space. Consequently, the performance of TAC and RAC is expected to be improved as a result of maximizing the corresponding lower bounds. For example, as $\alpha\rightarrow 0$, the maximum entropy objective in \eqref{eq-lt-cum-rew-ext} vanishes and consequently the lower bounds in Proposition \ref{proposition-app} become $\lim_{k\rightarrow\infty}\mathcal{T}^k Q$. In other words, maximum entropy RL degenerates to conventional RL as we expected.
\begin{figure}[ht!]
\centering
\includegraphics[width=0.8\columnwidth]{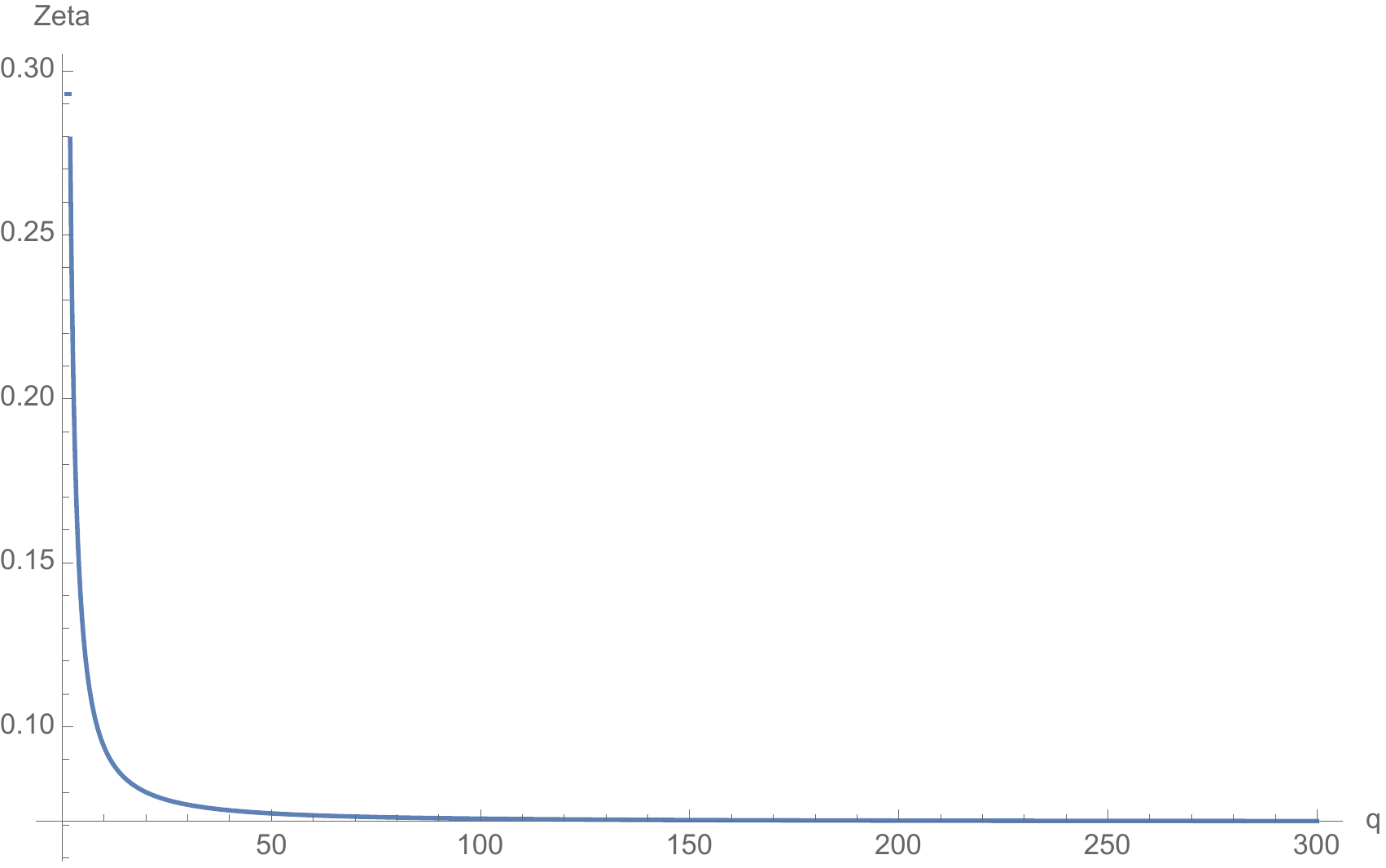}
\caption{The change of $\zeta_{\mathcal{H}_q}$ with respect to $q$.}
\label{fig-zeta}
\end{figure}

In addition, the performance lower bounds can also be influenced by the entropic index. Particularly, for TAC, $\zeta_{\mathcal{H}_q}$ in Lemma \ref{lemma-1} and in Proposition \ref{proposition-app} is a function of the entropic index $q$ of Tsallis entropy. Numerical studies show that $\zeta_{\mathcal{H}_q}$ is often decreasing in value as a result of increasing $q$. An example is depicted in Figure \ref{fig-zeta} where $\xi^*=\xi_*=1.0$, $\zeta^*=\zeta_*=1.0$, $\alpha=1.0$ and $\sigma^*=1.0$. Since TAC degenerates to SAC when $q=1$, we can conclude that TAC can achieve a higher performance bound than SAC through proper setting of $q>1$. This result suggests that TAC can be potentially more effective than SAC.

Different from TAC, the performance lower bound for RAC in Proposition \ref{proposition-app} is not a direct function of the entropic index $\eta$ of R\'enyi entropy. Nevertheless, with different settings of $\eta$, we can adjust the influence of the maximum entropy objective in \eqref{eq-lt-cum-rew-ext} and in \eqref{eq-pi-improve-gac}. Through this way, policy improvement during RL can be controlled further, resulting in varied trade-offs between exploration and exploitation. It paves the way for training simultaneously multiple policies through our Ensemble Actor-Critic (EAC) algorithm developed in Subsection \ref{sub-sec-eac}.

\section*{Appendix G}

Proof of Proposition \ref{proposition-ens} is presented in this appendix.
\begin{proof}
For the sake of simplicity, we will only consider the case when assumption A1 in Appendix F is valid. The proof can be extended to RL problems with multi-dimensional actions. However, the details will be omitted in this appendix.

Because $Q^\pi$ is continuous, differentiable with bounded derivatives and unimodal in any state $s\in\mathbb{S}$, by adjusting $\alpha$ in \eqref{eq-pi-improve-gac}, we can bound the difference between $\bar{a}_{\pi_i,s}$ and the action that maximizes $Q^{\pi}(s,\cdot)$, where $\bar{a}_{\pi_i,s}$ refers to the mean action of the stochastic policy $\pi_i(s,\cdot)$ and $\pi_i$ (with $i=1,\ldots,L$) in Proposition \ref{proposition-ens} denotes the $i$-th policy in the ensemble that maximizes \eqref{eq-pi-improve-gac} when $\alpha\neq 0$. In view of this, we will specifically consider the case when $\bar{a}_{\pi_i,s}$ equals to the Q-function maximization action. The case when they are different but the difference is bounded can be analyzed similarly.

In line with the above, we can proceed to study the behavior of the joint policy $\pi^e$ in any state $s$. For any policy $\pi\in\Pi$, the standard deviation for action sampling in state $s$ is $\sigma_{\pi,s}$ and $\sigma_{\pi,s}\leq\sigma^*$. Because of this, when $L$ is large enough, we can see that
\[
\E_{a\sim\pi^e(s,\cdot)} Q^{\pi}(s, a)\geq \E_{a\sim\tilde{\pi}^e(s,\cdot)}Q^{\pi} (s,a)
\]
\noindent
where
\[
\tilde{\pi}^e(s) = \argmax_{a\in\{\hat{a}_1,\ldots,\hat{a}_L\}} Q^{\pi}(s,a)
\]
\noindent
is a policy that chooses the action with the highest Q-value out of $L$ actions sampled independently from the same normal distribution with mean
\[
\argmax_{a\in\mathbb{A}} Q^{\pi}(s,a)=0
\]
and standard deviation $\sigma^*$. Here, without loss of generality, we simply consider the situation that $Q^{\pi}(s,\cdot)$ is maximized when $a=0$. Let $|\tilde{\pi}^e(s)|$ stand for the absolute value of actions sampled from policy $\tilde{\pi}^e$. Since $Q^{\pi}$ is continuous, differentiable with bounded derivatives and unimodal, when $L$ is sufficiently large, $-|\tilde{\pi}^e(s)|$ can be estimated accurately as the maximum over $L$ actions sampled independently from the same distribution $\Omega$ with the probability density function below.
\[
\Omega(a)=\left\{\begin{array}{cc}
\frac{2 \exp(\frac{-a^2}{2 \sigma^{*2}})}{ \sqrt{2\pi} \sigma^* }, & a\leq 0 \\
0, & a>0
\end{array}
\right.
\]
\noindent
According to the Fisher–Tippett–Gnedenko theorem \cite{fisher1930book}, with large $L$, the distribution for $-|\tilde{\pi}^e(s)|$ can be approximated precisely by the distribution with the following \emph{cumulative distribution function} (CDF).
\begin{equation}
F_{\Omega^{(L)}}(a)\approx F_{EV}\left( \frac{a-\rho_L}{\varphi_L} \right)
\label{eq-dist-omega-approx}
\end{equation}
\noindent
where $F_{\Omega^{(L)}}$ refers to the CDF of the probability distribution determined by the maximum over $L$ randomly sampled numbers, denoted as $\Omega^{(L)}$. $F_{EV}$ stands for the CDF of the extreme value distribution \cite{leadbetter2012book}. Meanwhile
\[
\rho_L=F^{(-1)}_{\Omega}(1-\frac{1}{L})
\]
\noindent
\[
\varphi_L=F^{(-1)}_{\Omega}(1-\frac{1}{eL})-F^{(-1)}_{\Omega}(1-\frac{1}{L})
\]
\noindent
Here $F^{(-1)}_{\Omega}$ denotes the inverse of the CDF of distribution $\Omega$. Based on \eqref{eq-dist-omega-approx}, it is possible to analytically estimate the mean and standard deviation of distribution $\Omega^{(L)}$. However the resulting mathematical expressions are fairly complicated and will not be presented in this appendix. Instead, all we need is to examine the limits below.
\[
\lim_{L\rightarrow\infty}\E\left[ \Omega^{(L)} \right]=0
\]
\[
\lim_{L\rightarrow\infty} \var\left[ \Omega^{(L)} \right]=0
\]
\noindent
Due to the two limits above, when $L\rightarrow\infty$, we can draw the conclusion that
\[
\begin{split}
& Q^{\pi^e}(s,a)-Q^{\pi'}(s,a)\\
& =\gamma \E_{s'\sim P(s,s',a)}\left( \begin{array}{l} \lim_{L\rightarrow\infty} \E_{a\sim\pi^e(s',\cdot)} Q^{\pi}(s', a)- \\ \max_{\pi'\in\Pi} \E_{a\sim\pi'(s',\cdot)} Q^{\pi}(s',a)\end{array} \right)\\
& \geq\gamma \E_{s'\sim P(s,s',a)} \left(\begin{array}{l} \lim_{L\rightarrow\infty} \E_{a\sim\tilde{\pi}^e(s,\cdot)} Q^{\pi}(s', a) -\\ \max_{\pi'\in\Pi} \E_{a\sim\pi'(s',\cdot)} Q^{\pi}(s',a)\end{array} \right) \\
& =\gamma\E_{s'\sim P(s,s',a)} \left(\begin{array}{l}\lim_{L\rightarrow\infty} \E_{a\sim\Omega^{(L)}} Q^{\pi}(s', a)-\\ \max_{\pi'\in\Pi} \E_{a\sim\pi'(s',\cdot)} Q^{\pi}(s',a) \end{array}\right)\\
& =\gamma\E_{s'\sim P(s,s',a)} \left(\begin{array}{l} \max_{a\in(-\infty,\infty)} Q^{\pi}(s', a)\\-\max_{\pi'\in\Pi} \E_{a\sim\pi'(s',\cdot)} Q^{\pi}(s',a)\end{array} \right)\\
& \geq 0
\end{split}
\]
This proves Proposition \ref{proposition-ens}.
\end{proof}

\begin{table*}[!ht]
\caption{Hyper-parameters settings of SAC, TAC, RAC, EAC-TAC and EAC-RAC.}
\label{tab-hyp-para}
\centering
\scalebox{0.80}{
\begin{tabular}{c|l|c|c|c|c|c|c}
\multirow{2}{*}{\textbf{Algorithms}} & \multirow{2}{*}{\textbf{Hyper-Parameters}} & \multicolumn{6}{c }{\textbf{Problems}}                                                                                                                                                                                                                                               \\ \cline{3-8}
                                     &                                            & \textit{\textbf{Ant}}              & \multicolumn{1}{c|}{\textit{\textbf{Half Cheetah}}} & \multicolumn{1}{c|}{\textit{\textbf{Hopper}}} & \multicolumn{1}{c|}{\textit{\textbf{Lunar Lander}}} & \multicolumn{1}{c|}{\textit{\textbf{Reacher}}} & \textit{\textbf{Walker2D}}         \\ \hline
\multirow{3}{*}{\textit{SAC}}        & Reward Scale                               & 3.0                                & 3.0                                                 & 3.0                                           & 3.0                                                 & 3.0                                            & 3.0                                \\
                                     & $\alpha$                              & 1.0                                & 1.0                                                 & 1.0                                           & 1.0                                                 & 1.0                                            & 1.0                                \\
                                     & Gradient Steps                             & 1                                  & 1                                                   & 4                                             & 1                                                   & 4                                              & 1                                  \\ \hline
\multirow{4}{*}{\textit{TAC}}        & Reward Scale                               & 3.0                                & 3.0                                                 & 3.0                                           & 3.0                                                 & 3.0                                            & 3.0                                \\
                                     & $\alpha$                              & 0.8                                & 0.8                                                 & 0.8                                           & 0.8                                                 & 0.8                                            & 0.8                                \\
                                     & Gradient Steps                             & 1                                  & 1                                                   & 4                                             & 1                                                   & 4                                              & 1                                  \\
                                     & Entropic index $q$                                  & $\{1.5,2.0,2.5\}$                                & $\{1.5,2.0,2.5\}$                                                 & $\{1.5,2.0,2.5\}$                                           & $\{1.5,2.0,2.5\}$                                                 & $\{1.5,2.0,2.5\}$                                            & $\{1.5,2.0,2.5\}$                                \\ \hline
\multirow{4}{*}{\textit{RAC}}        & Reward Scale                               & 3.0                                & 3.0                                                 & 3.0                                           & 3.0                                                 & 3.0                                            & 3.0                                \\
                                     & $\alpha$                              & 0.8                                & 0.8                                                 & 0.8                                           & 0.8                                                 & 0.8                                            & 0.8                                \\
                                     & Gradient Steps                             & 1                                  & 1                                                   & 4                                             & 1                                                   & 4                                              & 1                                  \\
                                     & Entropic index $\eta$                                    & $\{1.5,2.0,2.5\}$                                & $\{1.5,2.0,2.5\}$                                                 & $\{1.5,2.0,2.5\}$                                           & $\{1.5,2.0,2.5\}$                                                 & $\{1.5,2.0,2.5\}$                                            & $\{1.5,2.0,2.5\}$                                \\ \hline
\multirow{4}{*}{\textit{EAC-TAC}}    & Reward Scale                               & 3.0                                & 3.0                                                 & 3.0                                           & 3.0                                                 & 3.0                                            & 3.0                                \\
                                     & $\alpha$                              & 0.6                                & 0.6                                                 & 0.6                                           & 0.6                                                 & 0.6                                            & 0.6                                \\
                                     & Gradient Steps                             & 1                                  & 1                                                   & 4                                             & 1                                                   & 4                                              & 1                                  \\
                                     & Entropic index $q$           & 2.0 & 1.5                  & 1.5            & 1.5                  & 1.5             & 1.5 \\
                                     & Ensemble size & 6 & 6 & 6 & 6 & 6 & 6 \\ \hline
\multirow{4}{*}{\textit{EAC-RAC}}    & Reward Scale                               & 3.0                                & 3.0                                                 & 3.0                                           & 3.0                                                 & 3.0                                            & 3.0                                \\
                                     & $\alpha$                              & 0.6                                & 0.6                                                 & 0.6                                           & 0.6                                                 & 0.6                                            & 0.6                                \\
                                     & Gradient Steps                             & 1                                  & 1                                                   & 4                                             & 1                                                   & 4                                              & 1                                  \\
                                     & Entropic index $\eta$               & 1.5 & 1.5                  & 1.5            & 2.0                  & 1.5             & 1.5 \\
                                     & Ensemble size & 6 & 6 & 6 & 6 & 6 & 6 \\ \hline
\end{tabular}}
\end{table*}
  \begin{figure*}[!ht]
      \begin{minipage}[t]{0.33\textwidth}
        \includegraphics[width=\textwidth]{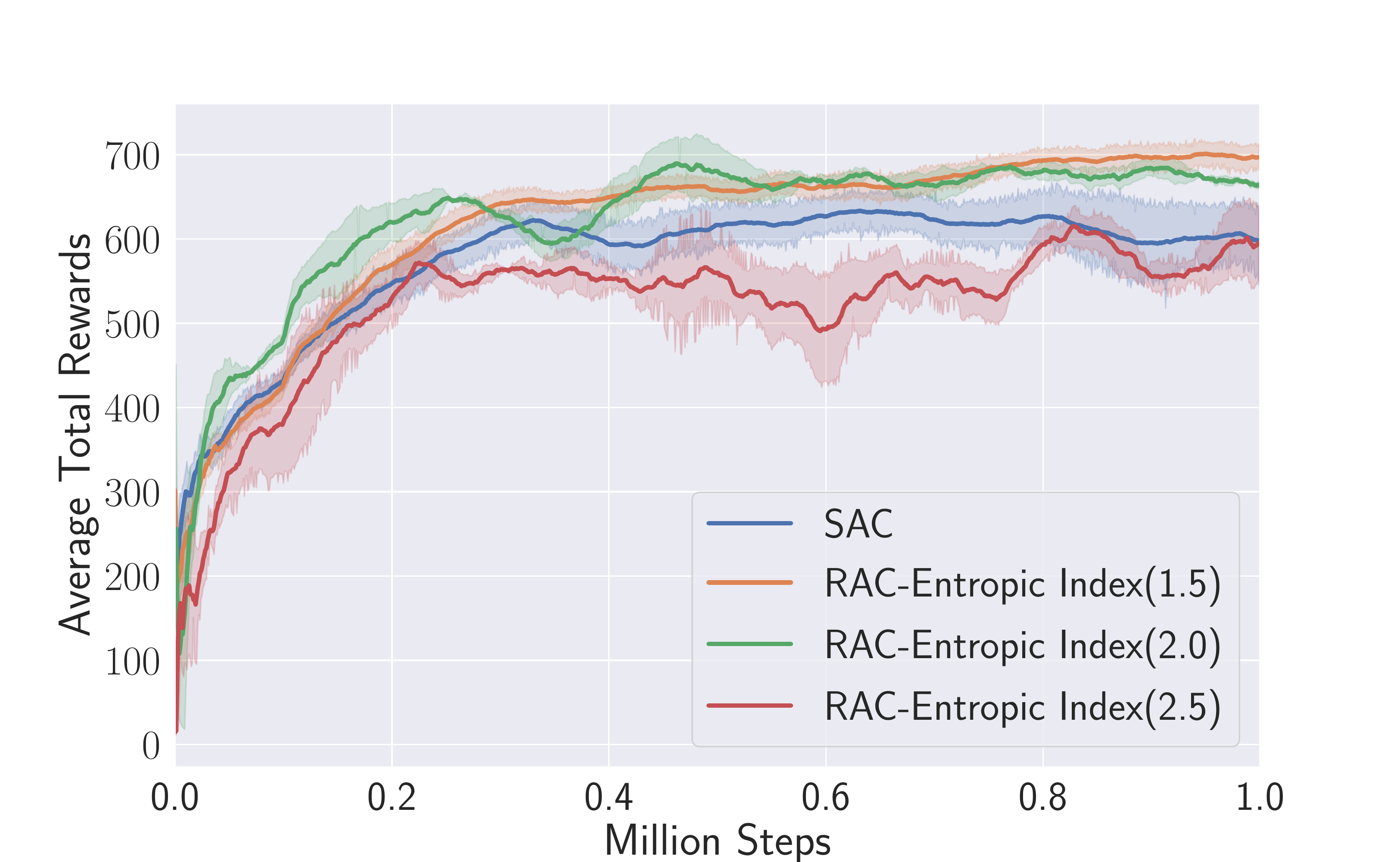}
        \subcaption{\tiny{Ant}}
      \end{minipage}%
      \begin{minipage}[t]{0.33\textwidth}
        \includegraphics[width=\textwidth]{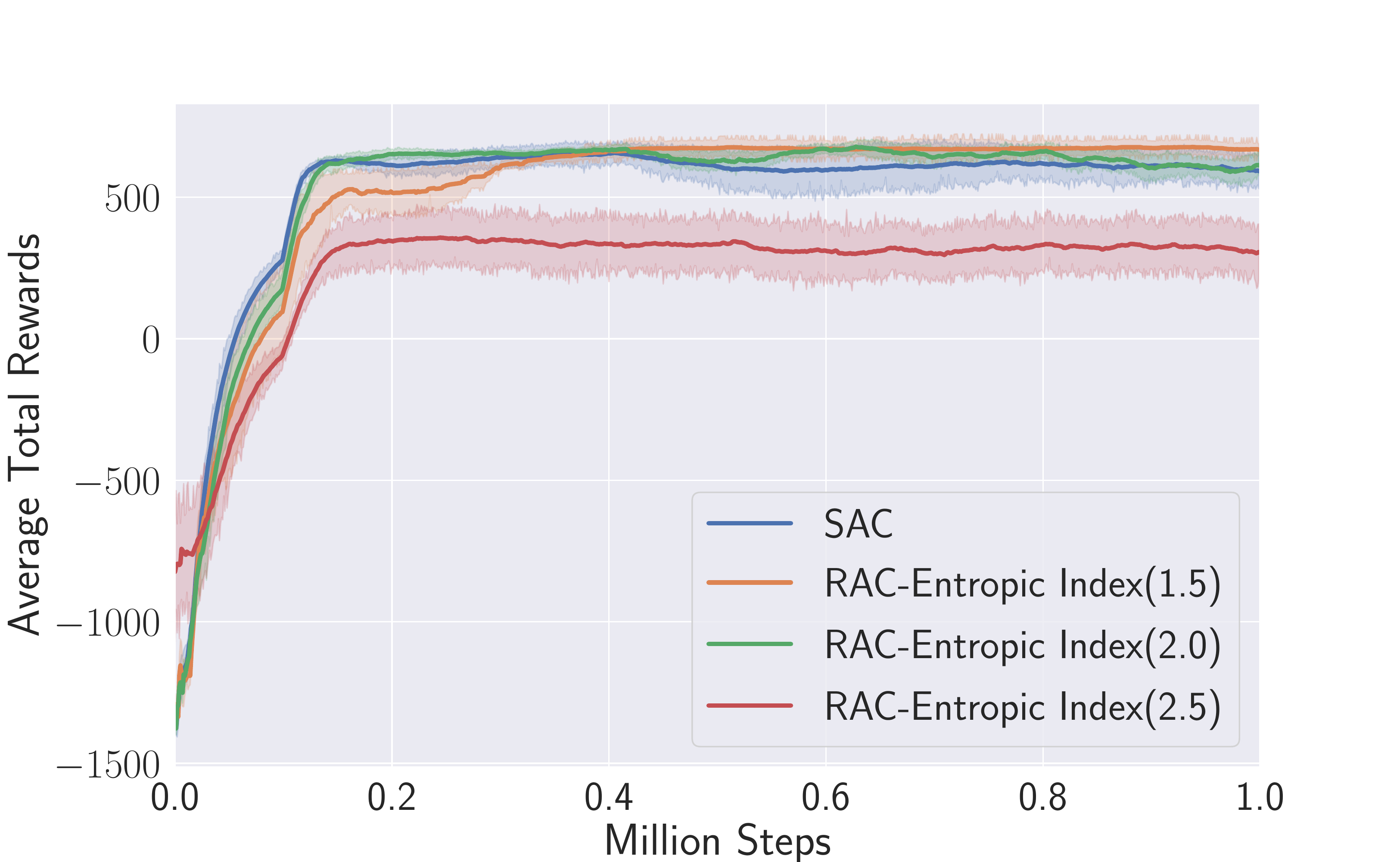}
        \subcaption{\tiny{Half Cheetah}}
      \end{minipage}
      \begin{minipage}[t]{0.33\textwidth}
        \includegraphics[width=\textwidth]{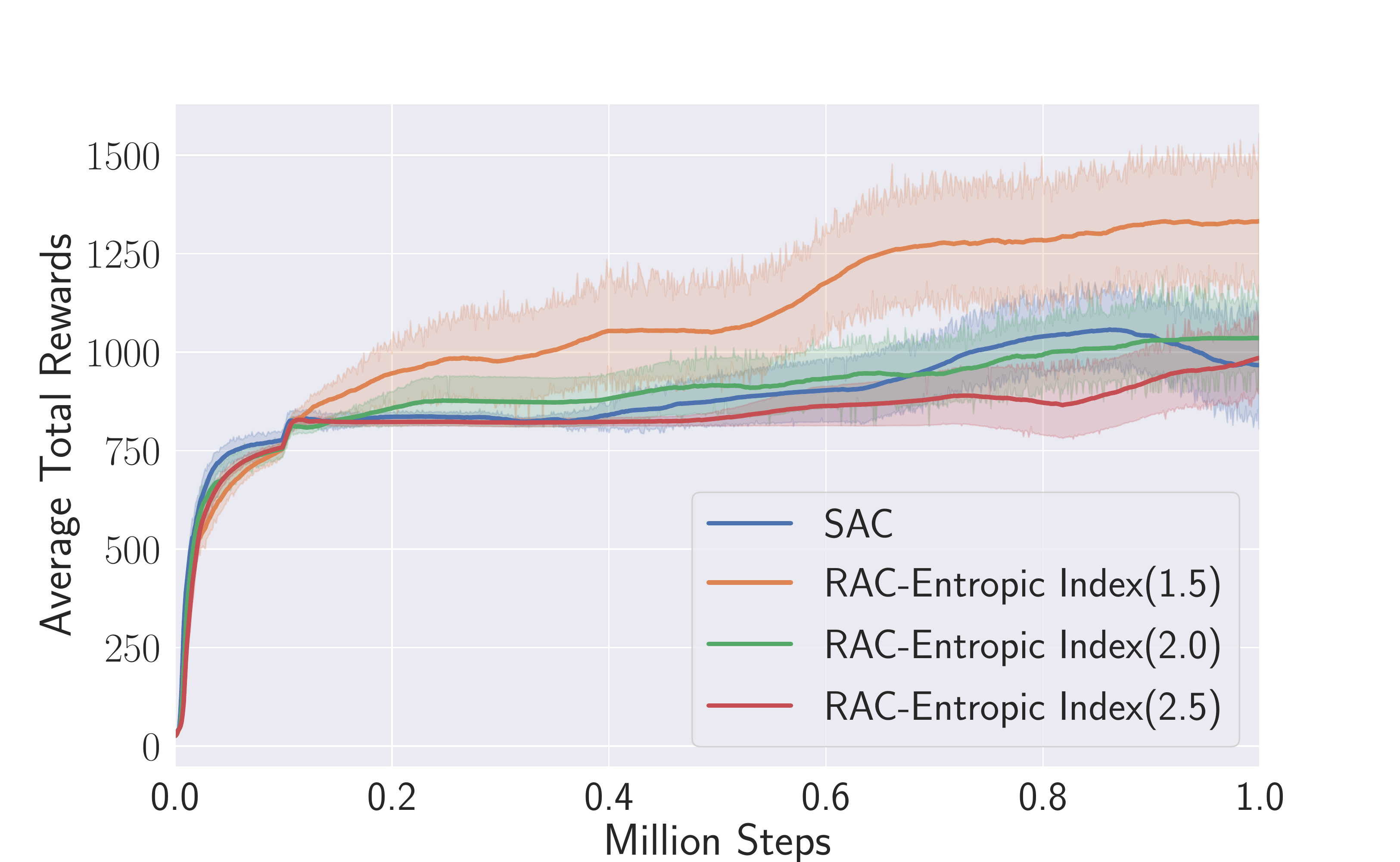}
        \subcaption{\tiny{Hopper}}
      \end{minipage}
      \\
      \begin{minipage}[t]{0.33\textwidth}
        \includegraphics[width=\textwidth]{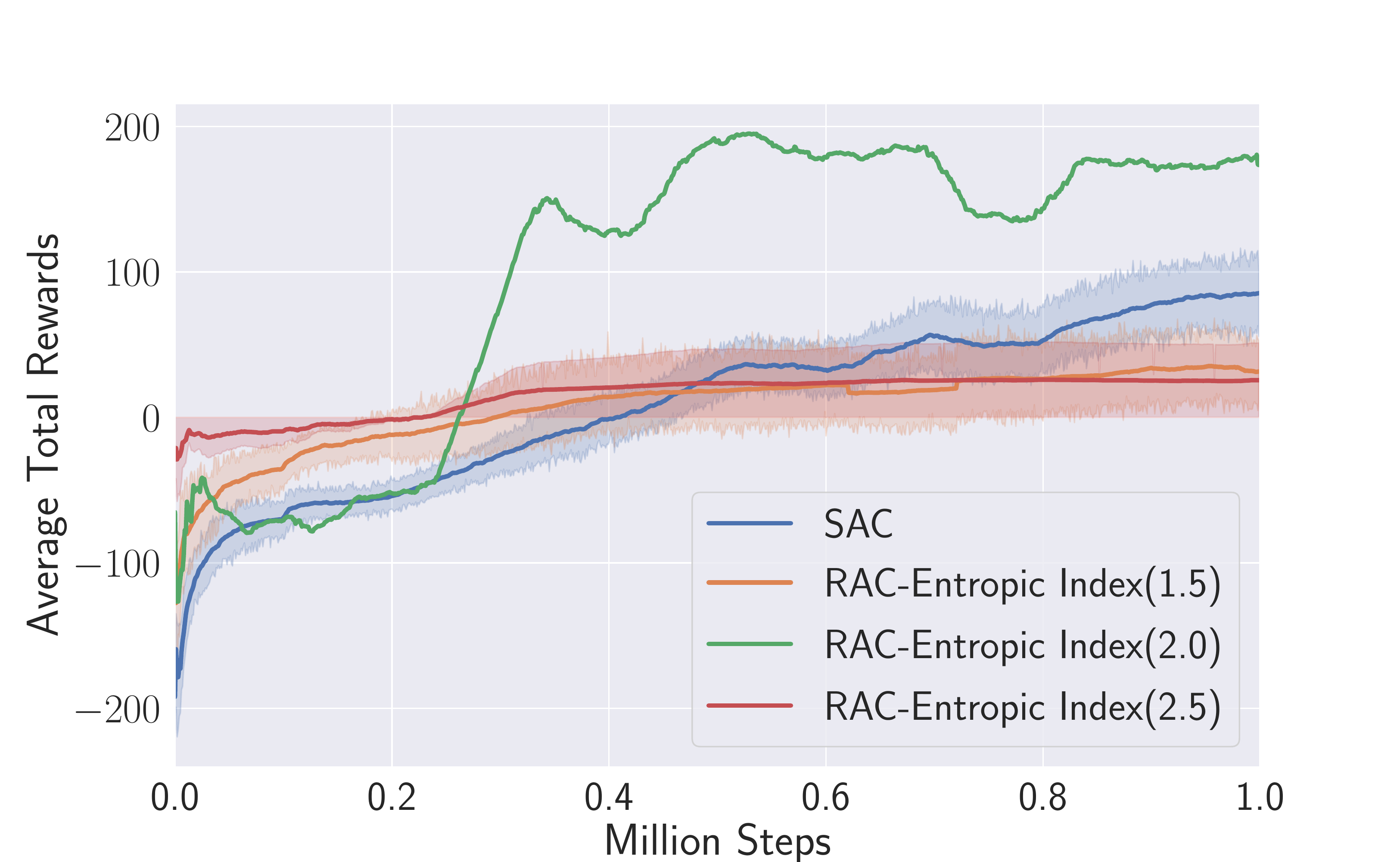}
        \subcaption{\tiny{Lunar Lander}}
      \end{minipage}
      \begin{minipage}[t]{0.33\textwidth}
        \includegraphics[width=\textwidth]{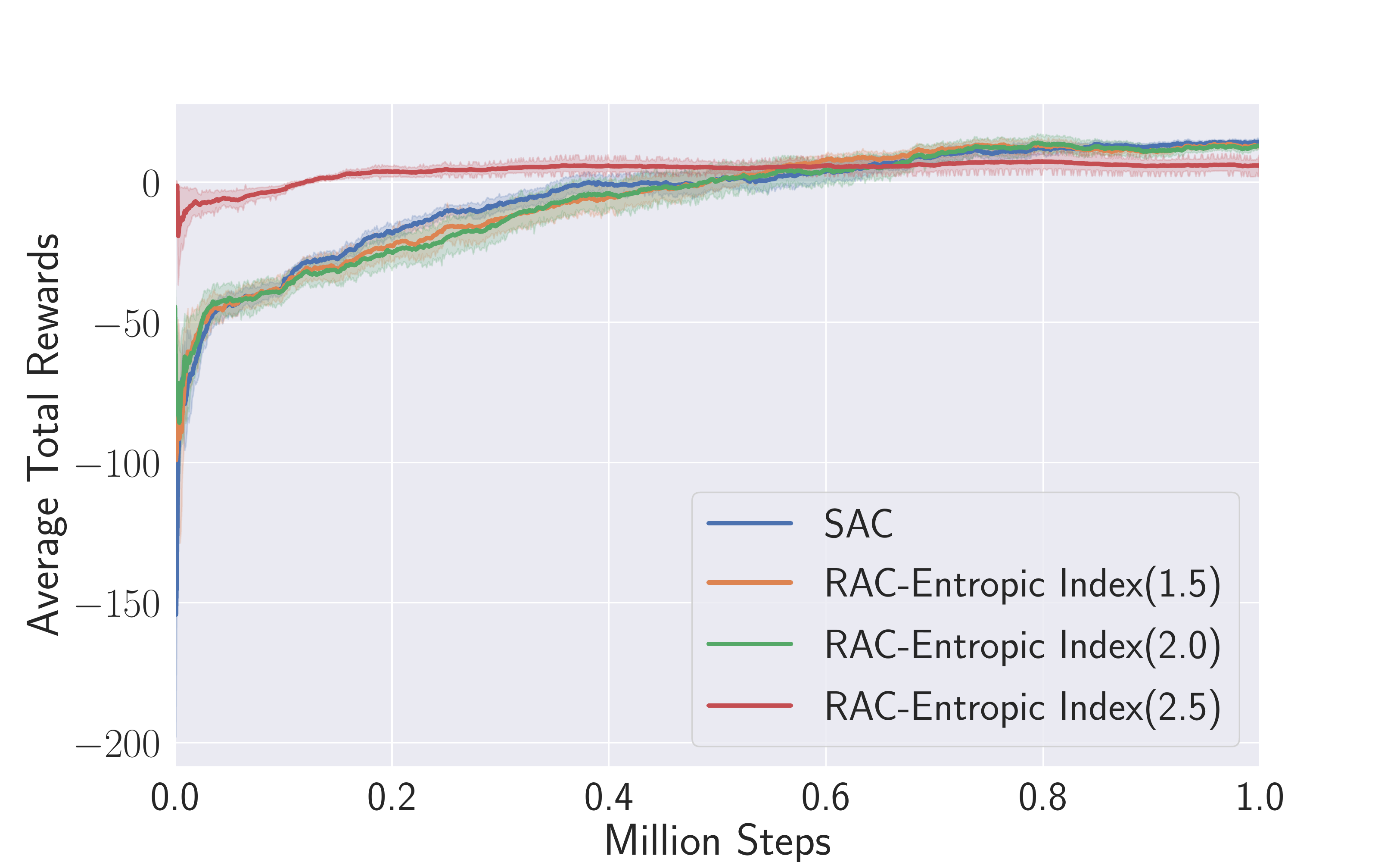}
        \subcaption{\tiny{Reacher}}
      \end{minipage}
      \begin{minipage}[t]{0.33\textwidth}
        \includegraphics[width=\textwidth]{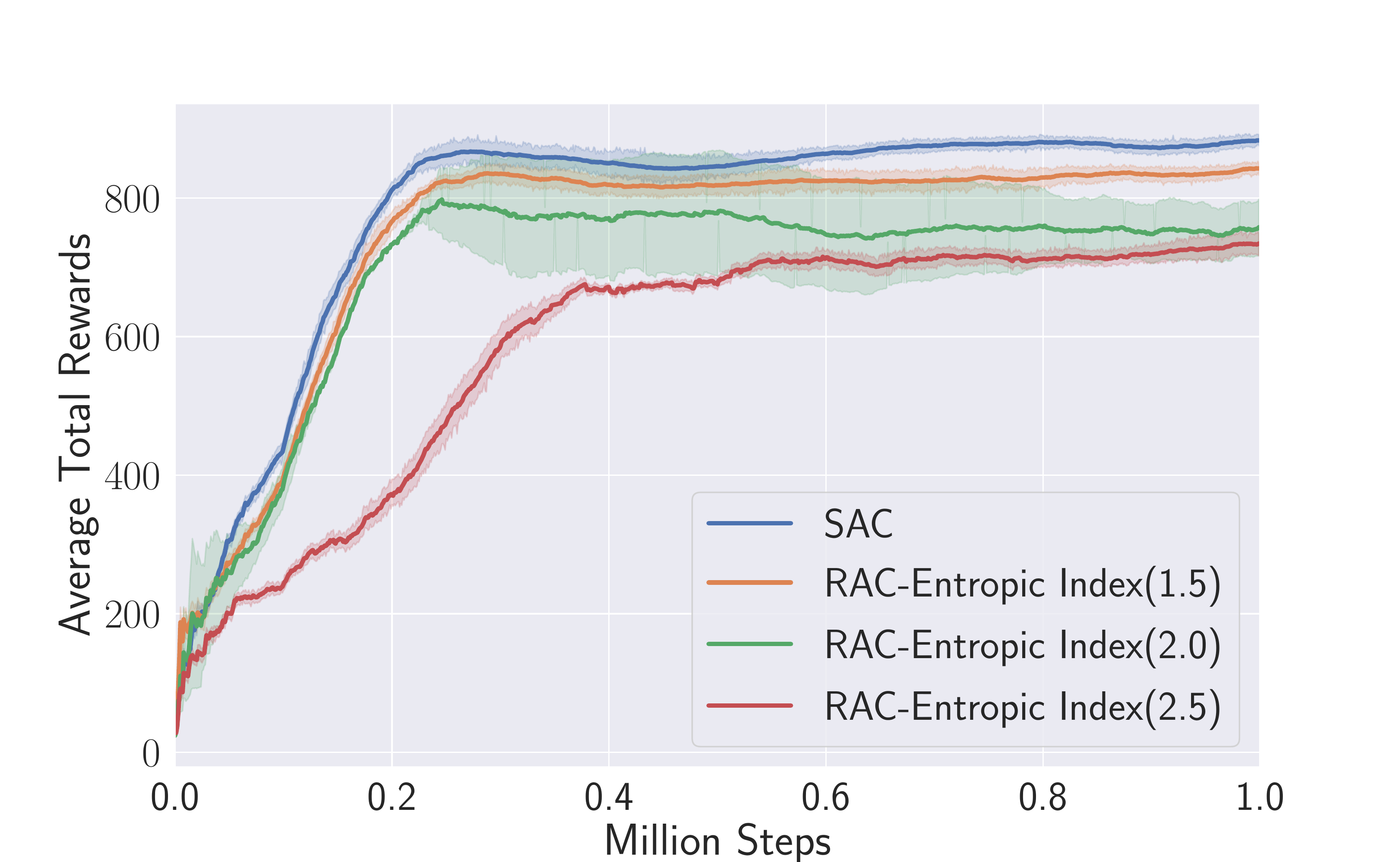}
        \subcaption{\tiny{Walker2D}}
      \end{minipage}
      \caption{The influences of entropic indices ($[1.5, 2.0, 2.5]$) on the performance of RAC on six benchmark control problems, with SAC serving as the baseline.}
      \label{fig:rac_entropic_index}
    \end{figure*}
\section*{Appendix H}

This appendix details the hyper-parameter settings for all competing algorithms involved in our experimental study in this paper. For SAC, TAC, RAC, EAC-TAC and EAC-RAC, the value functions $V_{\theta}$ and $Q_{\omega}$ as well as the policy $\pi_{\phi}$ are implemented as DNNs with two hidden layers. Each hidden layer contains 128 ReLU hidden units. The same network architectures have also been adopted in \cite{haarnoja2018icml}.

For each benchmark control task, we follow closely the hyper-parameter settings of SAC reported in \cite{haarnoja2018icml}. Specifically, SAC introduces a new hyper-parameter named the \emph{reward scale} which controls the scale of the step-wise reward provided by a learning environment as feedback to an RL agent. According to \cite{haarnoja2018icml}, the performance of SAC is highly sensitive to the reward scale since it affects the stochasticity of the learned policies. In addition, we should determine carefully the number of gradient-based updating steps, i.e. \emph{gradient steps}, to be performed during each learning iteration in Algorithm \ref{algorithm-1}. This hyper-parameter may also affect the learning performance to a certain extent. Relevant hyper-parameter settings with respect to SAC, TAC, RAC, EAC-TAC and EAC-RAC have been summarized in Table \ref{tab-hyp-para}.

In addition to the hyper-parameters covered in Table \ref{tab-hyp-para}, the learning rate for all algorithms and on all benchmark problems have been set consistently to $3\times 10^{-4}$. The discount factor $\gamma=0.99$. The smooth co-efficient for updating the target value function $\tau=0.01$. Additional hyper-parameter settings for TRPO, PPO and ACKTR are outlined here too. Specifically both the critic and actor of TRPO, PPO and ACKTR are implemented as DNNs with $64 \times 64$ hidden units, following the recommendation in~\cite{schulman20171}. Meanwhile the number of gradient-based training steps for each learning iteration equals to $10$ in the three competing algorithms. In addition, TRPO and PPO have adopted the Generalized Advantage Estimation (GAE)~\cite{Schulman:2015vz} technique with hyper-parameter $\lambda=0.95$. The clipping factor $\varepsilon$ of PPO is set to $0.2$.

To verify the true difference in performance and sample efficiency, we have evaluated all algorithms for 1 million consecutive steps on each benchmark problem. The testing performance of trained policies has also been recorded after obtaining every 1,000 state-transition samples from the learning environment. Meanwhile, we have chosen 10 random seeds for running each algorithm on every benchmark to reveal the performance difference among all competing algorithms with statistical significance.

\section*{Appendix I}

In this appendix, the influence of entropic index $\eta$ on the performance of RAC is studied empirically. In particular, Figure~\ref{fig:rac_entropic_index} depicts the observed learning performance of RAC subject to three different entropic index settings (i.e, 1.5, 2.0, and 2.5) and also compares RAC with SAC as the baseline. From the figure, we can clearly see that, with proper settings of the entropic index $\eta$, RAC can perform consistently well across all benchmarks and sometimes outperform SAC. Meanwhile, it seems that smaller values for $\eta$ usually yield better performance. However this is not always true. For example, on the Lunar Lander problem, the best performance of RAC is obtained when $\eta=2.0$. In consequence, we can draw two conclusions from the results. First, $\eta$ can be set to small values for general good performance. Second, the influence of $\eta$ on the performance of RAC can be problem-specific. Due to this reason, while comparing RAC with other competing algorithms in Figure \ref{fig:performance_evaluation}, we used the best $\eta$ value observed in Figure \ref{fig:rac_entropic_index} for each benchmark.

\end{document}